\newcommand\RE[1]{\textcolor{red}{#1}}
\newcommand\BK[1]{\textcolor{black}{#1}}
\definecolor{citecolor}{RGB}{48,111,186}
\definecolor{topacolor}{RGB}{34,139,34}
\newtheorem{proposition}{Proposition}[section]
\newcommand\bb[1]{\textbf{#1}}
\newcommand\ul[1]{\underline{#1}}
\newcommand\DA{$\textcolor{blue}{\downarrow}$}
\newcommand\UA{$\textcolor{red}{\uparrow}$}
\newcommand\x{$\times$}
\newcommand\ie{\textit{i.e.}}
\newcommand\eg{\textit{e.g.}}
\newcommand\Eg{\textit{E.g.}}
\newcommand\etal{\textit{et al.}}
\newcommand\vs{\textit{vs.}}
\title{What Makes a ``Good'' Data Augmentation in Knowledge Distillation -- A Statistical Perspective}
\author{%
Huan Wang$^{1,2,\dag}$ \qquad Suhas Lohit$^{2,\ast}$ \qquad Mike Jones$^{2}$ \qquad Yun Fu$^{1}$ \\
$^1$Northeastern University, Boston, MA \qquad $^2$MERL, Cambridge, MA \\
\href{http://huanwang.tech/Good-DA-in-KD/}{Project: http://huanwang.tech/Good-DA-in-KD} \\
}
\begin{document}

\maketitle
\renewcommand{\thefootnote}{\fnsymbol{footnote}}
\footnotetext[2]{This paper originates from Huan's summer internship work at MERL.}
\footnotetext[1]{Corresponding author: \texttt{slohit@merl.com}}

\begin{figure}[h]
\centering
\resizebox{\linewidth}{!}{
\begin{tabular}{c@{\hspace{0.01\linewidth}}c@{\hspace{0.01\linewidth}}c}
 \includegraphics[width=0.48\linewidth]{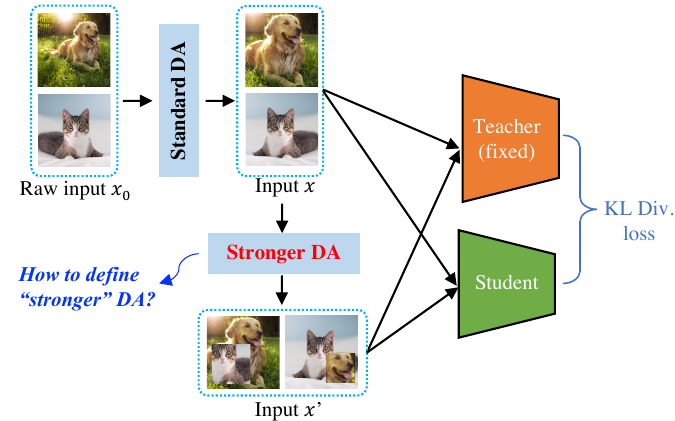} &
\includegraphics[width=0.5\linewidth]{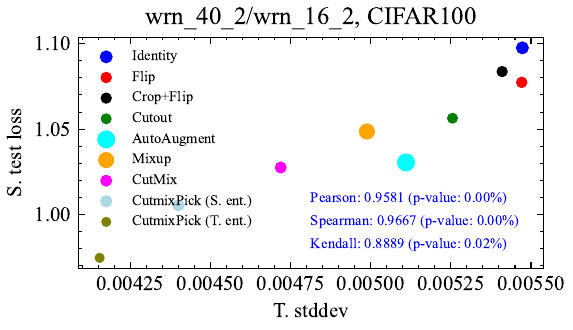} \\
(a) Apply additional stronger DA in KD & (b) S.~test loss~\vs~T.~stddev with different DA schemes
\end{tabular}} 
\caption{\textbf{(a)} Illustration of applying a stronger data augmentation (DA) in addition to the standard DA (random crop and flip) in knowledge distillation (KD). We ask: \textit{What makes a ``good'' DA when it is applied to KD in the manner of (a)?} \textbf{(b)} We present a proven proposition (Proposition~\ref{the:proposition}) to answer this question rigorously, along with a practical metric to evaluate the ``goodness'' of a DA. The proposed metric is called \textit{stddev of teacher's mean probability} (shorted as T.~stddev). As seen in (b), there is a \textit{strong} positive correlation (p-value < 5\% is typically considered statistically significant) between the student's test loss (S.~test loss) and T.~stddev, showing that T.~stddev well captures the ``goodness'' of different DA schemes in KD. The most striking fact from this plot may be: T.~stddev is \textit{purely} calculated with the teacher (no any student used) while it can ``predict'' the relative order of the \textit{student}'s performance, implying the ``goodness'' of DA in KD probably is student-invariant. }
\label{fig:front}
\end{figure}

\begin{abstract}
Knowledge distillation (KD) is a general neural network training approach that uses a teacher model to guide the student model. Existing works mainly study KD from the network output side (\eg, trying to design a better KD loss function), while few have attempted to understand it from the input side. Especially, its interplay with data augmentation (DA) has not been well understood. In this paper, we ask: Why do some DA schemes (\eg, CutMix) inherently perform much better than others in KD? What makes a ``good'' DA in KD? Our investigation from a statistical perspective suggests that \textbf{a good DA scheme should reduce the covariance of the teacher-student cross-entropy}. A practical metric, \textit{the stddev of teacher's mean probability} (T.~stddev), is further presented and well justified empirically. Besides the theoretical understanding, we also introduce a new entropy-based data-mixing DA scheme, \textit{CutMixPick}, to further enhance CutMix. Extensive empirical studies support our claims and demonstrate how we can harvest considerable performance gains simply by using a better DA scheme in knowledge distillation.
\end{abstract}

\section{Introduction} \label{sec:intro}
Deep neural networks (DNNs) are the de facto methodology in many artificial intelligence areas nowadays~\cite{LecBenHin15,schmidhuber2015deep}. How to effectively train a deep network has been a central topic for decades. In the past several years, efforts have mainly focused on better architecture design (\eg, batch normalization~\cite{ioffe2015batch}, residual blocks~\cite{resnet}, dense connections~\cite{huang2017densely}) and better loss functions (\eg, label smoothing~\cite{szegedy2016rethinking,muller2019does}, contrastive loss~\cite{hinton2002training}, large-margin softmax~\cite{liu2016large}) than the standard cross-entropy (CE) loss. Knowledge distillation (KD)~\cite{hinton2015distilling} is a training method that falls into the second group. In KD, a stronger network -- called teacher -- is introduced to guide the learning of the original network -- called student -- by minimizing the discrepancy between the representations of the two networks,
\begin{equation}
\small
    \mathcal{L}_{KD} = (1 - \alpha) \mathcal{L}_{CE}(y, \mathbf{p}^{(s)}) + \alpha \tau^2 \mathcal{D}_{KL}(\mathbf{p}^{(t)}/\tau, \mathbf{p}^{(s)}/\tau),
\label{eq:kd_loss}
\end{equation}
where $\mathcal{D}_{KL}$ represents KL divergence~\cite{kullback1997information}; $\alpha \in (0, 1)$ is a factor to balance the two loss terms; $\mathcal{L}_{CE}$ denotes the cross-entropy loss; $y$ is the one-hot label and $\mathbf{p}^{(t)}, \mathbf{p}^{(s)}$ stand for the teacher's and student's output probabilities over the classes; $\tau$ is a temperature constant~\cite{hinton2015distilling} to smooth predicted probabilities. KD allows us to train smaller, more efficient neural networks without compromising on accuracy, which facilitates deploying deep learning in resource constrained environments (\eg, on mobile devices). KD has found plenty of applications in many tasks~\cite{chen2017learning,wang2020collaborative,feng2019triplet,jiao2019tinybert,wang2020knowledge}. 

Most existing KD methods have attempted to improve it by proposing better KD loss functions applied at the network outputs~\cite{peng2019correlation,park2019relational,tian2019contrastive}. Few works have considered KD from \textit{the input side}. Especially, the interplay between KD and data augmentation (DA)~\cite{shorten2019survey} has not been well understood so far (note, by DA here, we mean the conventional DA concept: generating a new input by \textit{transforming one or multiple inputs}. A broader scope of DA may involve the \textit{neural network}, \eg, dropout can be seen as a kind of DA~\cite{bouthillier2015dropout}. We do not consider this type of DA in this paper due to the limited length). 

In this work, we ask: \emph{What makes a ``good'' data augmentation in knowledge distillation}? A clear answer to this question has many benefits. First, theoretically, it can help us towards a better understanding about how data augmentation plays a role in KD. Second, practically, it can bring us considerable performance gain -- in Fig.~\ref{fig:resnet20_train_time}, we show test error rates using the standard CE loss (no teacher)~\vs~using KD loss (with a teacher). As seen, a stronger DA can lower the test error rate and admit more training iterations without overfitting in KD. It is pretty obvious to see that ``Flip+Crop'' is stronger than ``Flip'' alone in Fig.~\ref{fig:resnet20_train_time}. However, for other DA schemes, such as Mixup~\cite{zhang2018Mixup} \textit{vs.}~AutoAugment~\cite{cubuk2019autoaugment}, which one is stronger? Not very clear. We thus desire a principled way (\eg, a concrete metric) to make the vague concept ``stronger'' exact. Presenting such a theoretically sound metric and empirically validating its effectiveness is the goal of this paper.

\begin{wrapfigure}{r}{0.5\linewidth}
\centering
\includegraphics[width=\linewidth]{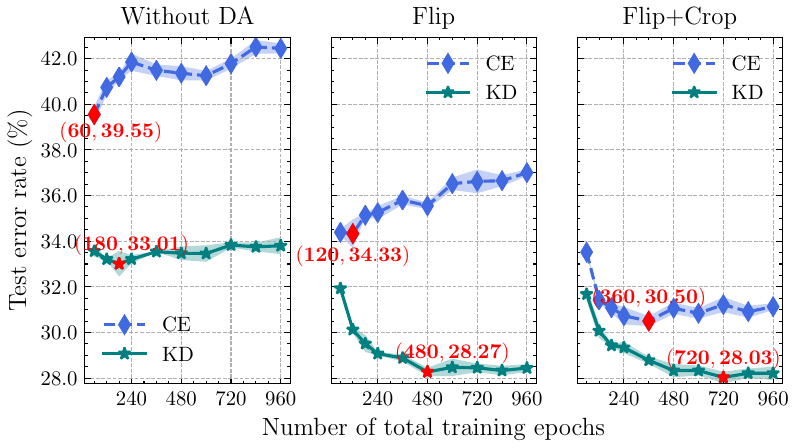}
\caption{Test error rate of resnet20 on CIFAR100 when trained for different numbers of epochs with (KD) and without (CE) knowledge distillation (the teacher is resnet56 for KD). Each result is obtained by averaging 3 random runs (shaded area indicates the stddev). ``Flip'': random horizontal flip; ``Crop'': random crop. The optimal number of training epochs and its test loss are highlighted in \RE{red}.}
\label{fig:resnet20_train_time}
\vspace{-2mm}
\end{wrapfigure}

Intuitively, a good DA should enrich the input data and expose more knowledge of the teacher so that the student can generalize better. We formalize this idea from a statistical learning perspective. Specifically, we will show a good DA scheme is defined by a lower variance (or covariance) of the \textit{teacher's mean output probability} over different input samples, which ultimately leads to a lower generalization gap for the student. The proposed theory is well justified by our extensive empirical studies on CIFAR100 and Tiny ImageNet datasets with various pairs. The proposed theory well explains why CutMix is better than other alternatives (such as Mixup~\cite{zhang2018Mixup}, AutoAugment~\cite{cubuk2019autoaugment}) in KD.

In addition to the new theoretical results, we also propose an entropy-based data picking scheme to select more informative samples for KD, which can deliver even lower variance of the teacher's mean probability as well as lower generalization error of the student. 

We make the following contributions in this paper: 
\begin{itemize}
    \item We present a proven proposition (Proposition~\ref{the:proposition}) that precisely answers what defines a better DA in KD: Given a fixed teacher, a better DA is the one that gives a lower variance of the teacher's mean probability.
    \item The proposition is well justified empirically on standard image classification datasets with many teacher-student pairs.
    \item An entropy-based data picking scheme is introduced to further reduce the variance of the teacher's mean probability, which can further advance CutMix, the prior state-of-the-art DA approach among those evaluated in this paper. 
    \item Empirically, we show how the presented theory can benefit in practice -- we can simply enhance the existing KD methods by using a stronger DA and prolonged training iterations.
\end{itemize}

\vspace{-2mm}
\section{Related Work}
\vspace{-2mm}
\noindent \bb{Knowledge Distillation (KD)}. The general idea of knowledge distillation is to guide the training of a student model through a (stronger) teacher model (or an ensemble of models). It was pioneered by Buciluǎ \etal~\cite{bucilua2006model} and later refined by Hinton \etal~\cite{hinton2015distilling}, who coined the term. Since its debut, KD has seen extensive application in vision and language tasks~\cite{chen2017learning,heo2019comprehensive,wang2020collaborative,jiao2019tinybert,wang2020knowledge}. Many variants have been proposed regarding the central question in KD, that is, how to define the \emph{knowledge} transferred from the teacher to the student. Examples of such knowledge definitions include feature distance~\cite{romero2014fitnets}, feature map attention~\cite{zagoruyko2016paying}, feature distribution~\cite{passalis2018learning}, activation boundary~\cite{heo2019knowledge}, inter-sample distance relationship~\cite{park2019relational,peng2019correlation,liu2019knowledge,tung2019similarity},  and mutual information~\cite{tian2019contrastive}. Several works~\cite{muller2019does,shen2021label,chandrasegaran2022revisiting} investigate the connection between label smoothing and knowledge distillation. Another line of works (\eg,~\cite{tang2020understanding}) attempts to understand KD more theoretically. Over the past several years, the progress has been made primarily for intermediate feature maps and network outputs (\ie, through a better loss function). In contrast, our goal is to improve the KD performance at the \emph{input end} with the help of data augmentation.  We will show this path is as effective and also has much potential for future research.

\noindent \bb{Data Augmentation (DA)}. Deep neural networks are prone to overfitting, \ie, building input-target mappings using undesirable or irrelevant features (like noise) in the data. Data augmentation is a prevailing technique to curb overfitting~\cite{shorten2019survey}. In classification tasks, data augmentation aims to explicitly provide data with \textit{label-invariant transformations} (such as random crop, horizontal flip, color jittering, Cutout~\cite{devries2017improved}) during training so that the model can learn representations robust to those nuisance factors. Recently, more advanced data augmentation methods were proposed, which not only transform the input, but also transform the target. For example, Mixup~\cite{zhang2018Mixup} linearly mixes two images with the labels mixed by the same linear interpolation. Manifold Mixup~\cite{verma2019manifold} is similar to Mixup but conducts the mix operation in the feature level instead of pixel level; CutMix~\cite{yun2019cutmix} pastes a patch cut from an image onto another image with the label decided by the area ratio of the two parts. AutoAugment~\cite{cubuk2019autoaugment} is a strong DA method that finds an optimal augmentation policy from a large search space via reinforcement learning. When both the input and target are transformed simultaneously, the key is to maintain a \emph{semantic correspondence} between the new input and new target. Unlike these methods, which focus on general classification using the cross-entropy loss, our work investigates the interplay between data augmentation and knowledge distillation loss and proposes new data augmentation specifically for knowledge distillation. 

Some recent KD works also involve the utilization of DA in KD, such as~\cite{beyer2022knowledge,cui2021isotonic}. Especially, \cite{beyer2022knowledge} also employs Mixup and prolonged training to enhance the student performance in KD, akin to ours. Yet it is worthwhile to note that our paper is \textit{substantially different} from theirs, in that we are seeking the \textit{theoretical} reason explaining how to define a better DA in KD to deliver better performance; these works mainly investigate in an empirical fashion, with no theoretical results presented. Meanwhile, our work is not limited to one specific DA (see Sec.~\ref{sec:experimental_results}). We target a \textit{general theoretical} understanding which can apply to a broad scope of DA schemes (fortunately, as our experiments show, the proposed theory and the derived DA ``goodness'' measure indeed capture it). The fact that \cite{beyer2022knowledge} utilizes Mixup~\cite{zhang2018Mixup} and prolonged training to deliver 82.8\% top-1 accuracy with resnet50~\cite{resnet} on ImageNet~\cite{imagenet} can be a direct proof that the principle proposed in this work can bring us very promising practical benefits. 

One recent work~\cite{das2020empirical} also conducts empirical studies of the impact of DA on KD. They first apply DA (\eg, Mixup/CutMix) to the teacher training then conduct the KD step as usual (no extra DA in this step). Our investigation is the \emph{exact opposite} to their setup: We train the teacher as usual (no Mixup/CutMix), then in the KD step we employ a more advanced DA (\eg, Mixup/CutMix). Interestingly, they conclude that the teacher trained with Mixup/CutMix \emph{hurts} the student's generalization ability, while we consistently see student performance boost via a stronger DA. Another recent work~\cite{sohn2022genlabel} utilizes KD to re-label mixed samples in Mixup to fix the inaccurate labelling problem of Mixup. Their work shows that KD can be used to make Mixup more generally useful, which is orthogonal to the topic of this work.

The work of~\cite{menon2021statistical} presents a statistical perspective to understand how the softened probabilities in KD are better than the one-hot hard labels. Our work is inspired by their \emph{Bayes teacher} notion. This said, our work is different from theirs in that we focus on explaining how data augmentation plays a role in KD and answer what characterizes a good DA, while they attempt to answer why the knowledge distillation loss is better than the standard cross-entropy loss.

\vspace{-2mm}
\section{Theoretical Investigation} \label{sec:theory}
\vspace{-2mm}
\subsection{Prerequisites: Multi-Class Classification with KD}
Given a training set $S = \{(x_n, y_n)\}_{n=1}^N \sim \mathcal{D}^N$, where $\mathcal{D}$ is the joint distribution for input-output random variable pair $(x, y)$, the goal in multi-class classification is to pin down a predictor $\mathbf{f}: \mathcal{X} \rightarrow \mathbb{R}^C$ from a hypothesis class $\mathcal{H}$, where $\mathcal{X}$ is the input space and $C$ refers to the number of classes. The predictor $\mathbf{f}$ is supposed to minimize the \emph{true risk}
\begin{equation}
    R_\mathcal{D}(\mathbf{f}) \; \stackrel{\text{def}}{=} \;  \underset{(x,y)\sim \mathcal{D}}{\mathrm{E}}  \left[ L(y, \mathbf{f}(x)) \right],
\label{eq:true_risk}
\end{equation}
where $L$ stands for the loss objective function (\eg, cross-entropy); the subscript $\mathcal{D}$ of $R_\mathcal{D}$ is to emphasize that the true risk is defined on the data distribution. The true risk is approximated in practice on a separate test set. 

For training, the predictor aims to minimize the \emph{empirical risk} defined on the training sequence $S$:
\begin{equation}
    R_S(\mathbf{f}) \; \stackrel{\text{def}}{=} -\frac{1}{N} \; \sum_{n=1}^N \mathbf{e}^{\top}_{y_n} \log(\mathbf{f}(x_n)),
\label{eq:empirical_risk}
\end{equation}
where $\mathbf{e}_y \in \{0,1\}^C$ is a one-hot vector indicating the label $y \in [C] = \{1,2,\cdots,C\}$. The subscript $S$ of $R_S$ is to emphasize the empirical risk is defined on the finite sampled data points.

In the context of KD, the one-hot hard target vector $\mathbf{e}_y$ is replaced with a probability vector $\mathbf{p}^{(t)}(x) \in \mathbb{R}^C_{+}$, giving us the \emph{empirical distilled risk} of $\mathbf{f}$:
\begin{equation}
    \hat{R}_S(\mathbf{f}) \; \stackrel{\text{def}}{=} -\frac{1}{N} \; \sum_{n=1}^N \mathbf{p}^{(t)}(x_n)^{\top} \log(\mathbf{f}(x_n)),
\label{eq:kd_risk}
\end{equation}
where the super-script $t$ indicates the fixed \textit{teacher} model. The theory concerning why Eq.~(\ref{eq:kd_risk}) is better than Eq.~(\ref{eq:empirical_risk}) has been established in~\cite{menon2021statistical}. Interested readers may refer to their paper for more details. Next, we look into how the data augmentation plays a role in KD. 

\subsection{What Makes a ``Good'' DA in KD?}
Intuitively, a better DA should provide more information, \ie, expose more knowledge of the teacher so that the student can absorb more and thus generalize better. We make this idea rigorous as follows.

\begin{proposition}
Given a bounded loss function and a fixed teacher model with the empirical distilled risk defined in Eq.~(\ref{eq:kd_risk}), for any predictor $\mathbf{f}$, consider two sampled sequences $S_1 \in \mathcal{D}^N$ and $S_2 \in \mathcal{D}^N$, they are made up of $N$ elements sampled from the same distribution $\mathcal{D}$, \textbf{while not \textit{i.i.d.}} (especially when data augmentation is employed). If the elements in $S_1$ present a larger correlation than those in $S_2$, then the student's generalization gap trained on $S_1$ will be greater than that trained on $S_2$:
\begin{equation}
\underset{{S_1\sim \mathcal{D}^N}}{\mathrm{E}} \left[(\hat{R}_{S_1}(\mathbf{f}) - R_{\mathcal{D}}(\mathbf{f}))^2 \right] >  \underset{{S_2\sim \mathcal{D}^N}}{\mathrm{E}} \left[(\hat{R}_{S_2}(\mathbf{f}) - R_{\mathcal{D}}(\mathbf{f}))^2 \right].
\end{equation}
\label{the:proposition}
\end{proposition}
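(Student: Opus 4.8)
The plan is to recognize that the quantity in the statement is nothing but the variance of the empirical distilled risk, and then to decompose that variance into a part shared by $S_1$ and $S_2$ and a part governed purely by the pairwise correlations among the samples. First I would argue that $\hat{R}_S(\mathbf{f})$ is an unbiased estimator of $R_{\mathcal{D}}(\mathbf{f})$: since every $x_n$ is marginally distributed as $\mathcal{D}$ and (following the Bayes-teacher viewpoint of \cite{menon2021statistical}) the per-sample distilled loss $Z_n \stackrel{\text{def}}{=} -\mathbf{p}^{(t)}(x_n)^{\top}\log(\mathbf{f}(x_n))$ satisfies $\mathrm{E}[Z_n] = R_{\mathcal{D}}(\mathbf{f})$, we obtain $\mathrm{E}_{S}[\hat{R}_S(\mathbf{f})] = R_{\mathcal{D}}(\mathbf{f})$. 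Consequently the expected squared deviation in the statement is exactly $\mathrm{Var}(\hat{R}_S(\mathbf{f})) = \mathrm{Var}\big(\tfrac{1}{N}\sum_{n=1}^N Z_n\big)$, which reduces the claim to comparing two variances. Boundedness of the loss guarantees all the second moments below are finite.

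Next I would expand the variance of the average. Writing $\sigma^2 = \mathrm{Var}(Z_n)$, which by the common marginal distribution is identical for every $n$ and \emph{identical across $S_1$ and $S_2$}, gives
\begin{equation}
\mathrm{Var}\Big(\tfrac{1}{N}\sum_{n=1}^N Z_n\Big) = \frac{\sigma^2}{N} + \frac{1}{N^2}\sum_{i\neq j}\mathrm{Cov}(Z_i, Z_j).
\end{equation}
The leading term $\sigma^2/N$ is therefore common to both sequences and cancels in the comparison; the entire gap between the two sides of the inequality is carried by the off-diagonal covariance sum. Formalizing ``$S_1$ presents a larger correlation than $S_2$'' as $\sum_{i\neq j}\mathrm{Cov}^{(1)}(Z_i,Z_j) > \sum_{i\neq j}\mathrm{Cov}^{(2)}(Z_i,Z_j)$ (equivalently, larger aggregate correlation coefficients $\rho_{ij}=\mathrm{Cov}(Z_i,Z_j)/\sigma^2$, since $\sigma^2$ is shared), the desired strict inequality follows immediately. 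The i.i.d.\ special case, where all off-diagonal covariances vanish and the familiar $\sigma^2/N$ rate is recovered, falls out as a sanity check.

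The step I expect to be the real obstacle is pinning down the hypothesis precisely and linking it to the practical metric. The ``correlation of elements'' is naturally a property of the inputs $x_i$, whereas what actually controls the variance is the covariance of the \emph{derived losses} $Z_i$; I would need either a monotonicity argument showing that more strongly correlated augmented inputs yield more strongly correlated distilled losses, or to adopt the covariance of $Z_i$ directly as the working definition of correlation. In addition, the covariance $\mathrm{Cov}(Z_i,Z_j)$ is driven, for a fixed $\mathbf{f}$, by the covariance of the teacher outputs $\mathbf{p}^{(t)}(x_i)$ and $\mathbf{p}^{(t)}(x_j)$; surfacing this dependence is exactly what motivates the empirical surrogate used in the paper, the \emph{stddev of the teacher's mean probability}. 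I would therefore close by noting that a DA scheme producing less correlated teacher predictions shrinks the off-diagonal sum and hence the generalization gap, which is precisely the correlation visualized in Figure~\ref{fig:front}(b).
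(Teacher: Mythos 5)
Your proposal is correct and follows essentially the same route as the paper's proof: decompose the expected squared deviation into variance plus a (squared) bias term, observe that the per-sample variance contribution $\sigma^2/N$ is common to $S_1$ and $S_2$, and attribute the entire gap to the off-diagonal covariance sum $\tfrac{2}{N^2}\sum_{j<k}\mathrm{Cov}(Z_j,Z_k)$. The one divergence is that you invoke the Bayes-teacher viewpoint to make $\hat{R}_S(\mathbf{f})$ exactly unbiased for $R_{\mathcal{D}}(\mathbf{f})$ --- an assumption not in the hypotheses; the paper avoids it by showing only that $\mathrm{E}_S[\hat{R}_S(\mathbf{f})-R_{\mathcal{D}}(\mathbf{f})]$ is \emph{identical} for the two sequences (it equals $\mathrm{E}_x[q(x)]$ plus a constant), which already suffices because any common squared bias cancels in the comparison. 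Your explicit flagging that ``correlation of the elements'' must be read as covariance of the derived losses $Z_i$ rather than of the raw inputs is, if anything, more careful than the paper, which passes over that identification silently.
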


\begin{proof}
\vspace{-4mm}
Let $\Delta = \hat{R}_{S}(\mathbf{f}) - R_{\mathcal{D}}(\mathbf{f})$. By the definition of variance, \BK{ $\mathrm{E}_S [\Delta^2] = \mathrm{Var}_S[\Delta] + (\mathrm{E}_S [\Delta])^2$. To make the notation clearer, we define $q(x_i) = -\mathbf{p}^{(t)}(x_i)^\top \log (\mathbf{f}(x_i))$.}

(1) Since $R_\mathcal{D}(\mathbf{f})$ is a constant (albeit unknown),
\begin{equation}
\begin{aligned}
\mathrm{E}_{S \sim \mathcal{D}^N} [ \Delta ] &= \mathrm{E}_S [ \hat{R}_S(\mathbf{f})] + \text{Const} = \mathrm{E}_S [ \frac{1}{N} \sum_{i=1}^N q(x_i) ] + \text{Const} \\
&= \frac{1}{N} \sum_{i=1}^N \mathrm{E}_S[q(x_i)] + \text{Const} = \frac{1}{N} \sum_{i=1}^N \mathrm{E}_{x_i}[q(x_i)] + \text{Const} \\
&= \frac{1}{N} \cdot N \cdot \mathrm{E}_{x}[q(x)] + \text{Const} = \mathrm{E}_{x}[q(x)] + \text{Const},
\end{aligned}
\end{equation} 
where the second last equation is because each element $x_i$ in $S$ is drawn from the same distribution $\mathcal{D}$. From the RHS of the last equation, we can clearly see that for $S_1$, $S_2$, $\mathrm{E}_S [\Delta]$ is the same.

(2) Then we consider $\mathrm{Var}_S[\Delta]$. Again, since $R_\mathcal{D}(\mathbf{f})$ is a constant, we only need to consider
\begin{equation}
\begin{aligned}
\mathrm{Var}_S [\hat{R}_S({\mathbf{f}})] &= \mathrm{Var}_S [ \frac{1}{N} \sum_{i=1}^N  q(x_i) ] = \frac{1}{N^2} \mathrm{Cov}_S [ \sum_{j=1}^N q(x_j), \sum_{k=1}^N q(x_k) ] \\
&= \frac{1}{N^2} \big( \sum_{i=1}^N \mathrm{Var}_{x_i} [ q(x_i) ] + 2 \sum_{1\le j<k \le N} \mathrm{Cov}_S[ q(x_j), q(x_k) ] \big) \\
&= \frac{1}{N^2} \big( N \cdot \mathrm{Var}_{x} [ q(x) ] + 2 \sum_{1\le j<k \le N} \mathrm{Cov}_S[ q(x_j), q(x_k) ] \big) \\
& = \frac{1}{N} \mathrm{Var}_x[q(x)] + \frac{2}{N^2} \sum_{1\le j<k \le N} \mathrm{Cov}_S[ q(x_j), q(x_k) ],
\end{aligned}
\label{eq:covariance}
\end{equation}
where \BK{$\mathrm{Cov}[\cdot, \cdot]$} stands for covariance. From the last item in Eq.~(\ref{eq:covariance}), we can see that the covariance part is different for different sampled $S$'s. If the samples in $S_1$ present a larger correlation than those in $S_2$, we will have $\mathrm{Var}_{S_1}[\Delta] > \mathrm{Var}_{S_2}[\Delta]$, which further leads to increased generalization gap for the student,  \ie, $\mathrm{E}_{S_1} [(R_{S_1}(\mathbf{f}) - R_{\mathcal{D}}(\mathbf{f}))^2] > \mathrm{E}_{S_2} [(R_{S_2}(\mathbf{f}) - R_\mathcal{D}(\mathbf{f}))^2]$. The proof is finished.
\vspace{-3mm}
\end{proof}

\bb{Practical Use: Stddev of Teacher's Mean Probability (T.~stddev)}. Note in Proposition~\ref{the:proposition}, the predictor $\mathbf{f}$ (\ie, the student model) can be any one (not necessarily a converged model). Each student would have an order of different DA schemes regarding which is better ``in its opinion''. Presumably, different students will lead to different such orders. For practical use, we must pick a certain student as oracle to conduct the evaluation. Here, we can play a trick that can make it rather simple to use our theory -- \textit{assume} there is a student that performs \textit{exactly the same} as the teacher (this assumption is not unpractical, since one straightforward example is to use the teacher as student). Then we have
\begin{equation}
\begin{split}
    \mathrm{Cov}[ q(x_j), q(x_k) ] &= \mathrm{Cov}[ \mathbf{p}^{(t)}(x_j)^\top \log (\mathbf{f}(x_j)), \mathbf{p}^{(t)}(x_k)^\top \log (\mathbf{f}(x_k)) ] \\
    &= \mathrm{Cov}[ \mathbf{p}^{(t)}(x_j)^{\top} \log(\mathbf{p}^{(t)}(x_j)), \mathbf{p}^{(t)}(x_k)^{\top} \log(\mathbf{p}^{(t)}(x_k)). \\
\end{split}
\label{eq:covariance_only_t}
\end{equation}
In this case, we only need the covariance of the \emph{teacher's} probability to measure the ``goodness'' of a certain DA technique, \textit{no need for the student}. Despite not using any information of the student, the proposed metric turns out to correlate surprisingly well with the student's performance (see Fig.~\ref{fig:Tstddev_vs_Stestloss_cifar100_tinyimagenet}).

Based on Eq.~(\ref{eq:covariance_only_t}), when using $S_1$ \textit{vs.}~$S_2$ as training sequence, the fundamental factor answering for the variance gap of $\mathrm{Var}_{S}[\Delta]$ boils down to the covariance in $\{\mathbf{p}^{(t)}(x_i)\}_{i=1}^N$. That is, a larger covariance in $\{\mathbf{p}^{(t)}(x_i)\}_{i=1}^N$ leads to a larger covariance in $\{ \mathbf{p}^{(t)}(x_i)^{\top} \BK{\log}(\mathbf{p}^{(t)}(x_i)) \}_{i=1}^N$ in Eq.~(\ref{eq:covariance_only_t}), which ultimately leads to a higher $\mathrm{Var}_{S}[\Delta]$ in Eq.~(\ref{eq:covariance}). 

The next step is to find a feasible way to estimate the covariance in $\{\mathbf{p}^{(t)}(x_i)\}_{i=1}^N$. Consider the \textit{average} variable (denoted as $\mathbf{u}$ here) of several random variables $\{\mathbf{p}^{(t)}(x_k)\}_{k=1}^K$, 
\begin{equation}
\mathbf{u} = \frac{1}{K}\sum_{x_k \in S^*} \mathbf{p}^{(t)}(x_k), \mathbf{u} \in \mathbb{R}^C_+.
\label{eq:teacher's_mean_prob}
\end{equation}
Its variance (or equivalently, stddev) inherently takes into account the covariance among its addends. Therefore, we can use the variance of $\mathbf{u}$ as a proxy for the covariance in $\{\mathbf{p}^{(t)}(x_k)\}_{k=1}^K$:
\begin{equation}
\begin{split}
\mathbf{m} = \mathrm{Var}_{S^*}(\mathbf{u}),  \mathbf{m} \in \mathbb{R}^C_+, \quad \bar{m} = \frac{1}{C} \sum_{i \in [C]} (\mathbf{m}_i)^{\frac{1}{2}}, \bar{m} \in \mathbb{R}_+,
\end{split}
\label{eq:Tstddev}
\end{equation}
where $K$ is a pre-defined number of samples of set $S^*$ to realize the averaging effect (\eg, $K=640$ in our experiments for CIFAR100 and Tiny ImageNet -- see Appendix Sec.~\ref{subsec:calculation_Tstddev} for a concrete calculation example). Note, $\mathbf{u}$ can be interpreted as the \emph{teacher's mean probability} over $K$ input samples.

If there is a large covariance among $\{\mathbf{p}^{(t)}(x_k)\}_{k=1}^{K}$, $\mathbf{m}$ will be large (in an element-wise sense). Then the average of $\mathbf{m}$ over classes, \ie, the $\bar{m}$ (we do this averaging simply because we desire a \emph{scalar} metric), is a good indicator to capture such covariance. \textit{We thus formally introduce $\bar{m}$, the (averaged) T.~stddev, as the proposed metric to measure the quality of a data augmentation scheme}. A lower $\bar{m}$ implies a better DA by our definition. In the experiments, we will show this metric defined purely using the \emph{teacher} can accurately characterize the generalization error of the \emph{students} after distillation.

\begin{figure*}[t]
\centering
\begin{tabular}{cc}
    \includegraphics[width=\linewidth]{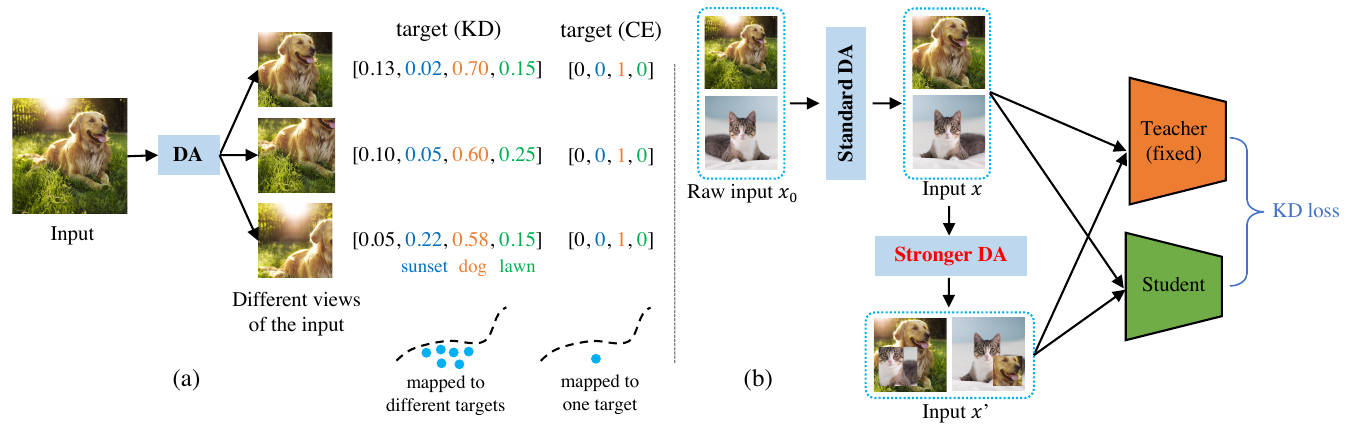} &
\end{tabular}
\caption{\bb{(a)} Illustration of the difference of supervised target between the KD loss and cross-entropy (CE) loss. An input is transformed to different versions owing to data augmentation. KD loss can provide extra information to the student by mapping these views to different targets, while the CE loss cannot. This work attempts to answer what characterizes a ``good'' data augmentation scheme in distillation. \bb{(b)} Illustration of adapting an existing DA approach to KD. The standard DA consists of random crop and horizontal flip. This training framework is employed to empirically verify our proposed metric to define a ``stronger'' data augmentation.}
\label{fig:overview}
\vspace{-5mm}
\end{figure*}

\vspace{-2mm}
\section{Evaluated Algorithms} \label{subsec:proposed_algorithms}
\vspace{-2mm}
\subsection{Extend Existing DA Approaches for KD}
Given an existing DA approach, this section explains how we adapt it properly to the case of KD.

Specifically, let $x_0$ denote the raw data, $x$ denote the transformed data by the standard augmentation (random crop and flip). Illustrated in Fig.~\ref{fig:overview}(b), we will add the DA following $x$ to obtain $x'$. Unlike the common data augmentation where \emph{only} the transformed input is fed into the network, we keep \emph{both the input $x$ and $x'$} for the training (as such, the number of input examples is doubled). The consideration of keeping both inputs is to maintain the information path for the original input $x$ so that we can easily see how the added information path of $x'$ leads to a difference.

For $x$, its loss is still the original KD loss, consisting of the cross-entropy loss and the KL divergence (Eq.~(\ref{eq:kd_loss})). Of special note is that, for $x'$, its loss is \emph{only} the KL divergence, \ie, \emph{we do not use the labels assigned by the DA algorithm} (for example, in the original Mixup and CutMix, they assign a linearly interpolated label to the augmented sample) because these labels can actually be \emph{wrong} (see Appendix Sec.~\ref{subsec:cutmix_samples_imagenet} for concrete examples on ImageNet). In fact, not using the hard label has another bonus. A dataset augmentation scheme which employs CE loss has to provide corresponding labels as supervisory information. In order to maintain the semantic correspondence, it cannot admit very extreme transformations for data augmentation. In contrast, in the Mixup/CutMix+KD setting described above, the data augmentation scheme need not worry about the labels as they are assigned \textit{by the teacher} -- the recent work~\cite{beyer2022knowledge} refers this kind of utilization of DA as \textit{function matching} of the teacher. As a result, it can admit a \emph{broader} set of transformations to expose the teacher's knowledge more completely. This reflects that data augmentation in KD has more freedom than in CE.

Among all the data augmentation techniques evaluated in this paper, we will show \emph{CutMix works best}. The fundamental reason for its success, as we will show, is that it achieves a much lower variance of the teacher's mean probability, implying it produces more diverse data than its counterparts.

\subsection{CutMixPick: Enhancing CutMix with Entropy-Based Data Picking}
In this section, we propose a data picking scheme to further reduce the variance of the teacher's mean probability. The idea is partly inspired by \emph{active learning}~\cite{settles2011theories}. In active learning, the learner enjoys the freedom to query the data instances to be labeled for training by an oracle (\ie, the teacher in our case)~\cite{settles2011theories}. Since the augmented data can vary in their quality, we can introduce a certain criterion to pick the more valuable data for the student.

Intuitively, we regard a sample with more \emph{information} is of higher quality. Therefore, we take \emph{Shannon's entropy} of the teacher's output probability as a natural measure to select samples,
\begin{equation}
H(\mathbf{p}^{(t)}(x)) = -\mathbf{p}^{(t)}(x)^\top \log(\mathbf{p}^{(t)}(x)).
\label{eq:teacher_entropy}
\end{equation}
Note this formula is in exactly the same form of Eq.~(\ref{eq:covariance_only_t}). Empirically, we will also show the data selected by this formula indeed results in lower $\bar{m}$ and better test loss for the student. 

Concretely, given a batch of data, we first apply CutMix to obtain a bunch of augmented samples. Then sort all the augmented samples by Eq.~(\ref{eq:teacher_entropy}) in ascending order and keep the top $r$ (a pre-defined percentage constant, $r=0.5$ in our experiments) samples.

This simple technique can be rather effective according to our empirical study. Another seemingly potential alternative is to use the \emph{student's} entropy as the picking metric. The intuition behind this is that a high-entropy sample in the view of the student can be regarded as a hard example, too. Learning with these hard examples may expand the student's knowledge by squeezing its blind spots. Despite this intuitively plausible explanation, in practice, we will show this scheme actually under-performs the former student-agnostic scheme in Eq.~(\ref{eq:teacher_entropy}), which is a bit surprising.

\vspace{-2mm}
\section{Experimental Results} \label{sec:experimental_results}
\vspace{-2mm}
\noindent \bb{Datasets and Networks}. We evaluate our method primarily on the CIFAR100~\cite{KriHin09} and Tiny ImageNet\footnote{\href{https://tiny-imagenet.herokuapp.com/}{https://tiny-imagenet.herokuapp.com/}} datasets. CIFAR100 has 100 object classes (32\x32 RGB images). Each class has 500  images for training and 100 images for testing. Tiny ImageNet is a small version of ImageNet~\cite{imagenet} with 200 classes (64\x64 RGB images). Each class has 500 images for training, 50 for validation and 50 for testing. To thoroughly evaluate our methods, we benchmark them on various standard network architectures: vgg~\cite{Simonyan2014Very}, resnet~\cite{resnet}, wrn~\cite{zagoruyko2016wide}, MobileNetV2~\cite{sandler2018mobilenetv2}, ShuffleV2~\cite{ma2018shufflenet}. We will also include results on ImageNet100 (a randomly drawn 100-class subset of ImageNet) and ImageNet.

\noindent \bb{Benchmark Methods}. In addition to the standard cross-entropy training and the original KD method~\cite{hinton2015distilling}, we also compare with the state-of-the-art distillation approach, \textit{Contrastive Representation Distillation} (CRD)~\cite{tian2019contrastive}. It is important to note that our method focuses on improving KD by using better \emph{inputs}, while CRD improves KD at the \emph{output} end (\ie, a better loss function). Therefore, they are orthogonal and we will show they can be combined together to deliver even better results.

\noindent \bb{Hyper-Parameter Settings}. The temperature $\tau$ of knowledge distillation is set to 4 following CRD~\cite{tian2019contrastive}. Loss weight $\alpha=0.9$ (Eq.~\eqref{eq:kd_loss}). For CIFAR100 and Tiny ImageNet, training batch size is 64; the original number of total training epochs is 240, with learning rate (LR) decayed at epoch 150, 180, and 210 by multiplier 0.1. The initial LR is 0.05. All these settings are \emph{the same} as CRD~\cite{tian2019contrastive} for fair comparison.  Note, in our experiments we will present the results of more training iterations. If the number of total epochs is scaled by a factor $k$, the epochs after which learning rate is decayed is also be scaled by $k$. For example, if we train a network for CIFAR100 for 480 epochs ($k=2$) in total, the learning rate will be decayed at epoch 300, 360, and 420. We use PyTorch~\cite{pytorch} to conduct all our experiments. For CIFAR100, we adopt the pretrained teacher models from CRD\footnote{\href{https://github.com/HobbitLong/RepDistiller}{https://github.com/HobbitLong/RepDistiller}} for fair comparison. For Tiny ImageNet and ImageNet100, we train our own teacher models. For ImageNet, we use torchvision models following CRD~\cite{tian2019contrastive}.

\noindent \bb{Data Augmentation Schemes}. We investigate the following popular DA schemes:
\begin{itemize}[leftmargin=*]
    \item \bb{Identity}: This augmentation scheme simply makes a copy of each batch data during training, which should be the \emph{lower bound} of all DA schemes discussed here since it adds no new information.
    \item \bb{Flip}: Random horizontal flip.
    \item \bb{Flip+Crop}: Random horizontal flip and random crop. This is the standard DA extensively used in 2D image recognition task (such as on CIFAR and ImageNet datasets).
    \item \bb{Cutout}~\cite{devries2017improved}: Cutout occludes a small random patch of an image.
    \item \bb{AutoAugment}~\cite{cubuk2019autoaugment}: AutoAugment is an ensemble of a collected DA schemes. The DA policy is automatically selected by reinforcement learning instead of manually.
    \item \bb{Mixup}~\cite{zhang2018Mixup}: Mixup applies linear interpolation between two inputs and applies the same linear interpolation to their labels to make the new label for the augmented sample.
    \item \bb{CutMix}~\cite{yun2019cutmix}: CutMix cuts a small patch from a source image and pastes it to another source image. The resulted image is considered as a new input. The target for the new input is a linear interpolation from the two source labels.
\end{itemize}

\subsection{Empirical Verification of Our Proposition}
\vspace{-2mm}
Before presenting results, one point worth mentioning is that, in this section we use \textit{test loss} instead of accuracy as the measure of student's performance in KD, because \textbf{(1)} all the formulas in Sec.~\ref{sec:theory} are derived using numerical loss instead of accuracy; \textbf{(2)} more importantly, it is observed that accuracy can mismatch with loss -- a model may achieve a better accuracy, meanwhile higher loss too~\cite{menon2021statistical}, which we also observe several times in our experiments. Using accuracy as measure would prevent us from seeing the correlation between T.~stddev and student's performance (see Appendix Fig.~\ref{fig:Tstddev_vs_Sacc1_vgg13vgg8_cifar100} for an example). Potential extension of our theory from loss to accuracy is left for future work.

In Fig.~\ref{fig:Tstddev_vs_Stestloss_cifar100_tinyimagenet}, we plot the scatters of S.~test loss and T.~stddev. The x/y-axis value of each data point is averaged by at least three random runs (see Tabs~\ref{tab:Tstddev_vs_Stestloss_cifar100_1}/\ref{tab:Tstddev_vs_Stestloss_cifar100_2} and Tabs~\ref{tab:Tstddev_vs_Stestloss_tinyimagenet_1}/\ref{tab:Tstddev_vs_Stestloss_tinyimagenet_2} for detailed numbers).

\bb{(1)} In terms of T.~stddev, there is a rough trend (\eg, on the vgg13/vgg8 pair on CIFAR100): Identity $<$ Flip $<$ Flip+Crop $<$ Cutout $<$ AutoAugment $<$ Mixup $<$ CutMix. These inequalities are well-aligned with our intuition. \textit{E.g.}, AutoAugment~\cite{cubuk2019autoaugment} includes Cutout~\cite{devries2017improved} in its transformation pool, thus should be stronger than Cutout. This is faithfully reflected by the T.~stddev on many pairs. 

\bb{(2)} Obviously, S.~test loss poses a \textit{clear positive correlation} with T.~stddev. Per our theory, lower T.~stddev should lead to better generalization risk for the student. This is generally well verified in these plots. We do see some minor counterexamples. Possible reasons are: \textit{1)} The test loss is obtained on the test set with finite samples, which is only an approximation of the true risk defined on distribution; \textit{2)} The teacher's mean probability is also evaluated on finite data. Despite them, the general picture from Fig.~\ref{fig:Tstddev_vs_Stestloss_cifar100_tinyimagenet} still confirms the positive correlation between S.~test loss and T.~stddev. The correlation is actually \textit{very significant} as the p-values indicate.

\begin{figure}[t]
\centering
\resizebox{0.96\linewidth}{!}{
\setlength{\tabcolsep}{0.5mm}
\begin{tabular}{c@{\hspace{0.01\linewidth}}c@{\hspace{0.01\linewidth}}c@{\hspace{0.01\linewidth}}c}
    \includegraphics{Figures/wrn_40_2_wrn_16_2_cifar100.pdf} &
    \includegraphics{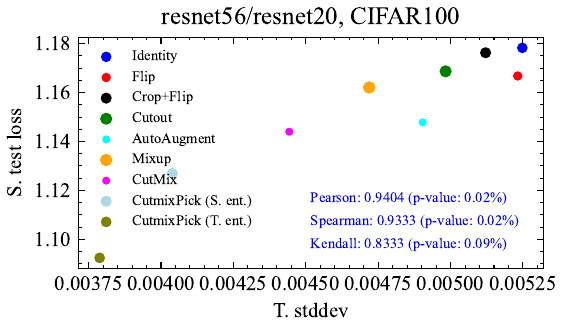} \\
    \includegraphics{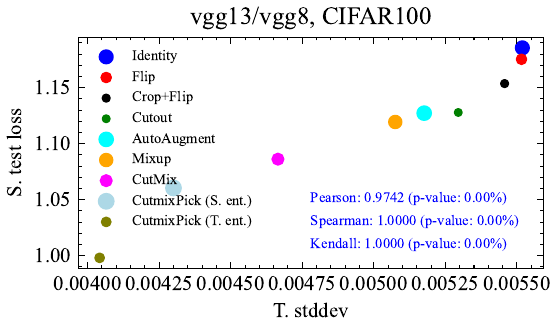} &
    \includegraphics{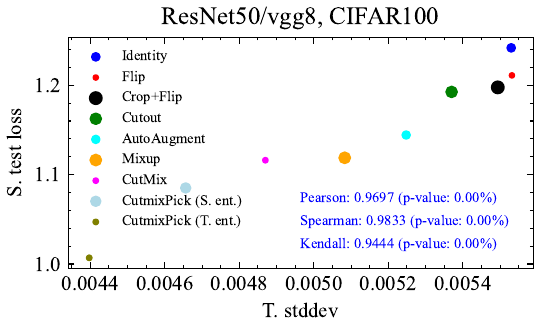} \\
    \includegraphics{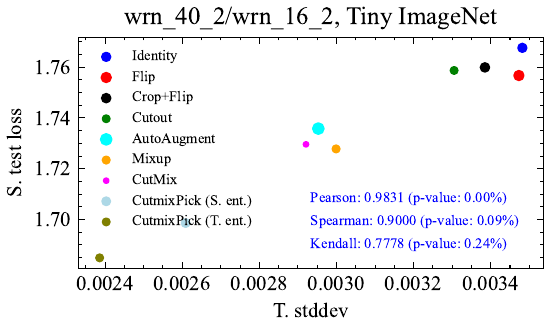} &
    \includegraphics{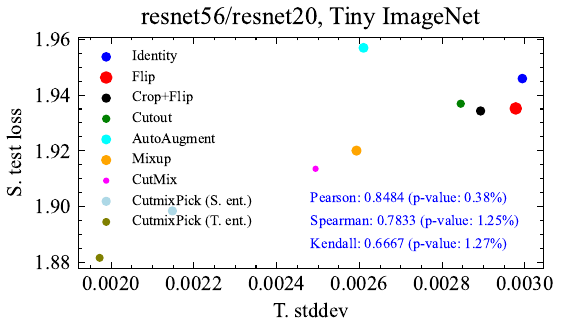} \\
    \includegraphics{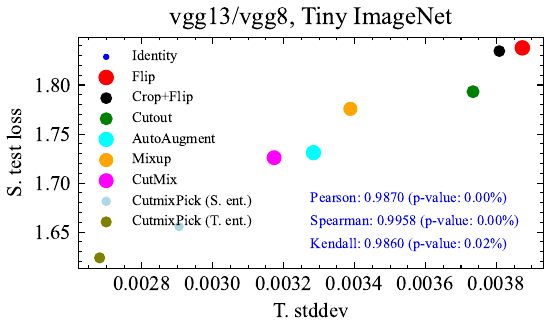} &
    \includegraphics{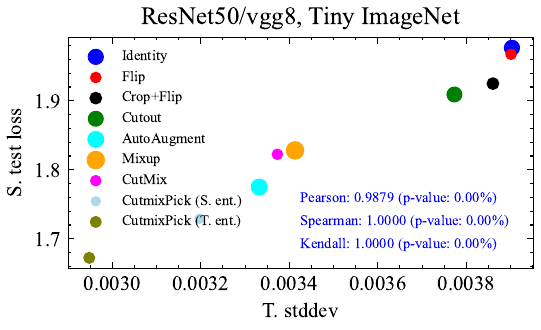} \\
\end{tabular}}
\vspace{-2mm}
\caption{Scatter plots of T.~stddev \textit{vs.}~S.~test loss of different pairs on CIFAR100 and Tiny ImageNet. The detailed numbers are deferred to Appendix (Tab.~\ref{tab:Tstddev_vs_Stestloss_cifar100_1}, Tab.~\ref{tab:Tstddev_vs_Stestloss_cifar100_2}, Tab.~\ref{tab:Tstddev_vs_Stestloss_tinyimagenet_1}, Tab.~\ref{tab:Tstddev_vs_Stestloss_tinyimagenet_2}).}
\label{fig:Tstddev_vs_Stestloss_cifar100_tinyimagenet}
\vspace{-8mm}
\end{figure}

\bb{(3)} Importantly, note we only need the teacher to define the quality of a certain DA in KD, \textit{no need for the student}. This is more clear if we examine the results in Tabs.~\ref{tab:Tstddev_vs_Stestloss_cifar100_1}/\ref{tab:Tstddev_vs_Stestloss_cifar100_2} and Tabs.~\ref{tab:Tstddev_vs_Stestloss_tinyimagenet_1}/\ref{tab:Tstddev_vs_Stestloss_tinyimagenet_2}. Taking vgg13/vgg8 and vgg13/MobileNetV2 as an example, the students are starkly different but both the student's performance correlates well with T.~stddev. This observation implies that \textit{the ``goodness'' of DA in KD is probably student-invariant}. This, notably, is a great advantage in practice since we can decide which DA should be used for the best performance simply using a formula (Eq.~(\ref{eq:Tstddev})) with a few network forwards, without having to train the students physically.

\subsection{Boosting KD with Stronger DA}
\label{subsec:exp_cifar100}
In this section, we present more results to show how the proposed theory can benefit in practice -- we can harvest considerable performance gain simply by using a stronger DA in KD.

\noindent \bb{Prolonged Training}. Notably, a stronger DA produces more diverse data, implying more information. Intuitively, it should take a student model \emph{more training iterations} (if the batch size does not change) to fully absorb the excessive information. That is, a stronger DA conceivably takes \emph{more} training iterations to fully exhibit its potential. This intuition is confirmed in Fig.~\ref{fig:resnet20_train_time} (and also two more pairs wrn\_40\_2/wrn\_16\_2 and vgg13/vgg8 in the Appendix). Thus, in our experiments, we will also report results with prolonged training iterations for maximized performance.

\bb{Results on CIFAR100}. The results on CIFAR100 dataset are shown in Tab.~\ref{tab:results_cifar100}.

\bb{(1)} Comparing the row ``KD+CutMix'' to ``KD", we see CutMix improves the student accuracies on \textit{all} the pairs. On the pair resnet32x4/ShuffleV2, the improvement is very significant (more than 1 percentage point). \bb{(2)} Comparing the row ``KD+CutMixPick'' to ``KD+CutMix'', we see 6/7 pairs are improved further, showing the proposed data picking scheme works in most cases. \bb{(3)} Finally, ``KD+CutMixPick'' scheme can be combined with more training iterations (960 epochs), which delivers even higher accuracies. \bb{(4)} If comparing our best results (KD+CutMixPick$_{960}$) with those of CRD (though this is not an apples-to-apples comparison since the two methods focus on different aspects to improve KD), we can see our approach outperforms CRD on 6/7 pairs. 

In the last two rows of Tab.~\ref{tab:results_cifar100}, when CRD~\cite{tian2019contrastive} is armed with our proposed ``CutMixPick'' and more training iterations, its results can be further advanced \emph{consistently}. This demonstrates that our method is general and can readily work with those methods focusing on better KD loss functions.

\begin{table*}[t]
\centering
\caption{Student test accuracy on \textbf{CIFAR100}. Each result is obtained by 3 random runs, mean (std) accuracy reported. The best results are in \bb{bold} and second best \ul{underlined}. The subscript 960 means the total number of training epochs (default: 240).}
\vspace{1mm}
\resizebox{\linewidth}{!}{
\setlength{\tabcolsep}{0.8mm}
\begin{tabular}{lccccccccc}
    \toprule
    Teacher                                 & wrn\_40\_2          & resnet56          & resnet32x4        & vgg13             & vgg13             & ResNet50          & resnet32x4   \\
    Student                                 & wrn\_16\_2          & resnet20          & resnet8x4         & vgg8              & MobileNetV2       & vgg8              & ShuffleV2 \\
    \midrule
    Teacher Acc.                            & 75.61             & 72.34             & 79.42             & 74.64             & 74.64             & 79.34             & 79.42 \\
    Student Acc.                            & 73.26             & 69.06             & 72.50             & 70.36             & 64.60             & 70.36             & 71.82 \\
    \midrule
    KD~\cite{hinton2015distilling}         & 74.92 (0.28)      & 70.66 (0.24)      & 73.33 (0.25)      & 72.98 (0.19)      & 67.37 (0.32)      & 73.81 (0.13)      & 74.45 (0.27) \\
    KD+CutMix                               & 75.34 (0.19)      & 70.77 (0.17)      & \ul{74.91} (0.20)  & 74.16 (0.18)      & 68.79 (0.35)      & 74.85 (0.23)      & 76.61 (0.18) \\
    \bb{KD+CutMixPick}                      & 75.59 (0.22)      & 70.99 (0.20)      & 74.78 (0.35)       & \ul{74.43} (0.20) & \ul{69.49} (0.32) & \ul{74.95} (0.18) & \ul{76.90} (0.25) \\
    KD$_{960}$~\cite{hinton2015distilling} & \ul{75.68} (0.12) & \bb{71.79} (0.29) & 73.14 (0.06)      & 74.00 (0.34)      & 68.77 (0.05)      & 74.04 (0.25)      & 74.64 (0.30) \\
    \bb{KD+CutMixPick$_{960}$}              & \bb{76.41} (0.10) & \ul{71.66} (0.15) & \bb{75.12} (0.18) & \bb{75.00} (0.17) & \bb{70.47} (0.12) & \bb{76.13} (0.16) & \bb{77.90} (0.30) \\
    \hdashline
    CRD~\cite{tian2019contrastive}         & 75.64 (0.21)      & \ul{71.63} (0.15) & 75.46 (0.25)      & 74.29 (0.12)      & 69.94 (0.05)      & 74.58 (0.27)      & 76.05 (0.09) \\
    CRD+\bb{CutMixPick}                     & \ul{75.96} (0.27) & 71.41 (0.26)      & \bb{76.11} (0.53) & \ul{74.65} (0.12) & \ul{69.95} (0.22) & \ul{75.35} (0.22) & \ul{76.93} (0.11) \\
    CRD+\bb{CutMixPick}$_{960}$             & \bb{76.61} (0.01) & \bb{72.40} (0.20) & \ul{75.96} (0.29) & \bb{75.41} (0.10) & \bb{70.84} (0.05) & \bb{76.20} (0.22) & \bb{78.51} (0.27) \\
    \bottomrule
\end{tabular}}
\label{tab:results_cifar100}
\vspace{-5mm}
\end{table*}

\begin{table*}[t]
\centering
\caption{Student test accuracy on \textbf{Tiny ImageNet}. The subscript 480 means the total number of training epochs (default: 240).}
\vspace{1mm}
\resizebox{\linewidth}{!}{
\setlength{\tabcolsep}{0.8mm}
\begin{tabular}{lccccccccc}
    \toprule
    Teacher                                 & wrn\_40\_2          & resnet56          & resnet32x4        & vgg13             & vgg13             & ResNet50          & resnet32x4    \\
    Student                                 & wrn\_16\_2          & resnet20          & resnet8x4         & vgg8              & MobileNetV2       & vgg8              & ShuffleV2  \\
    \midrule
    Teacher Acc.                            & 61.28             & 58.37             & 64.41             & 62.59             & 62.59             & 68.20             & 64.41 \\
    Student Acc.                            & 58.23             & 52.53             & 55.41             & 56.67             & 58.20             & 56.67             & 62.07 \\
    \midrule
    KD~\cite{hinton2015distilling}          & 58.65 (0.09)      & 53.58 (0.18)      & 55.67 (0.09)      & 61.48 (0.36)      & 59.28 (0.13)      & 60.39 (0.16)      & 66.34 (0.11)  \\
    KD+CutMix                   & 59.06 (0.18)      & 53.77 (0.33)      & 56.41 (0.04)      & 62.17 (0.11)      & 60.48 (0.30)      & 61.12 (0.18)      & 67.01 (0.30)  \\
    \bb{KD+CutMixPick}              & \ul{59.22} (0.05) & 53.66 (0.05)      & \ul{56.82} (0.23) & \ul{62.32} (0.18) & \ul{60.53} (0.18) & \ul{61.40} (0.26) & \ul{67.08} (0.13)  \\
    KD$_{480}$~\cite{hinton2015distilling}  & 59.20 (0.30)      & \ul{54.23} (0.24) & 55.49 (0.11)      & 61.72 (0.10)      & 59.27 (0.08)      & 60.10 (0.30)      & 65.81 (0.11)  \\
    \bb{KD+CutMixPick$_{480}$}      & \bb{60.07} (0.04) & \bb{54.25} (0.07) & \bb{57.54} (0.23) & \bb{62.60} (0.25) & \bb{60.66} (0.15) & \bb{61.95} (0.14) & \bb{67.35} (0.21) \\
    \hdashline
    CRD~\cite{tian2019contrastive}          & \ul{60.79} (0.24) & \ul{55.34} (0.02) & 59.28 (0.13)      & 62.92 (0.31)      & 62.38 (0.19)      & 62.03 (0.16)      & 67.33 (0.13)  \\
    CRD+\bb{CutMixPick}                    & 60.72 (0.09)      & 54.99 (0.16)      & \ul{59.65} (0.24) & \ul{63.39} (0.10) & \ul{62.54} (0.22) & \bb{62.85} (0.18) & \ul{67.64} (0.18)  \\
    CRD+\bb{CutMixPick}$_{480}$            & \bb{60.99} (0.33) & \bb{55.68} (0.22) & \bb{60.13} (0.13) & \bb{63.60} (0.20) & \bb{62.79} (0.03) & \ul{62.60} (0.17) & \bb{67.70} (0.35)  \\
    \bottomrule
\end{tabular}}
\label{tab:results_tinyimagenet}
\vspace{-4mm}
\end{table*}

\bb{Results on Tiny ImageNet}. We also evaluate CutMix and CutMixPick on a more challenging dataset, Tiny ImageNet. Similar to the case on CIFAR100, we have results on different teacher-student pairs, shown in Tab.~\ref{tab:results_tinyimagenet}. For prolonged training, we train for 480 epochs instead of 960 to save time. Most claims on the CIFAR100 dataset are also validated here: \bb{(1)} ``KD+CutMix'' is better than KD, which is verified on \emph{all} pairs. \bb{(2)} ``KD+CutMixPick'' is better than ``KD+CutMix'', verified on 6/7 pairs. The exception pair is resnet56/resnet20, where adding data picking decreases the accuracy slightly by 0.11\%. \bb{(3)} When ``KD+CutMixPick'' is trained for prolonged iterations, the students perform best.

We further evaluate our DA methods equipped with CRD~\cite{tian2019contrastive}, shown in the last two rows of Tab.~\ref{tab:results_tinyimagenet}. Our ``CutMixPick'' method further advances the prior SOTA on 5 pairs. When CRD+CutMixPick is trained for 480 epochs (instead of 240), further improvement can be observed on 6 of 7 pairs.

\bb{Apply DA to More KD Methods}. Notably, we achieve the above performance boosting simply using the original KD loss~\cite{hinton2015distilling}, \emph{with no bells and whistles}. This justifies one of our motivations in this paper, \ie, existing KD methods~\cite{peng2019correlation,park2019relational,tian2019contrastive} mainly improve KD at the network \emph{output} side via better loss functions, while we propose to improve KD at the \emph{input} side and show this path is just as promising. Actually, this performance boosting effect is \emph{generic} -- we also applied CutMix to another $5$ top-performing KD methods on CIFAR100: AT~\cite{zagoruyko2016paying}, CC~\cite{peng2019correlation}, SP~\cite{tung2019similarity}, PKT~\cite{passalis2018learning}, and VID~\cite{ahn2019variational}. \emph{All} the pairs see accuracy gains; half of them are even improved by more than $1\%$ point.

\bb{Results on ImageNet100 and ImageNet}. These results are deferred to Appendix~\ref{subsec:more_results}. In general, we observe that the correlation between T.~stddev and S.~test loss become weaker on ImageNet100 and ImageNet. This is because these two datasets are inherently harder than CIFAR100 and Tiny ImageNet. Nevertheless, the p-value of the correlation is still below 5\% on ImageNet100, \ie, still statistically significant, suggesting our theory can generalize to large-resolution ($224\times 224$) datasets.

\vspace{-2mm}
\section{Conclusion}
\vspace{-2mm}
In this paper, we attempt to precisely answer what makes a good data augmentation in knowledge distillation. By analyzing the generalization gap of the student under different sampling schemes, we reach the conclusion that a good data augmentation scheme should reduce the variance of the cross-entropy (\ie, the distilled risk) between the teacher and student. Based on this, we propose a new metric, the stddev of the teacher's mean probability (T.~stddev), as a feasible measure of the quality of data augmentation techniques. Empirical studies with various teacher-student pairs confirm the efficacy of the proposed metric for data augmentation quality in KD. Besides the theoretical understanding, we also develop an entropy-based data picking scheme to further enhance the prior best augmentation scheme (CutMix) in KD. Finally, we show how we can obtain considerable KD performance gains simply by using a stronger DA guided by the proposed theory.

\section*{Acknowledgments and Disclosure of Funding}
We thank the anonymous NeurIPS reviewers for giving us very helpful suggestions to improve this paper!

This work originates from Huan's internship at MERL, and is finished eventually after he returns to Northeastern University as a research assistant. This work is thus fully supported by MERL and Northeastern University. There is no third-party funding or support in any form. There are no competing interests to disclose.

\bibliographystyle{plain}
\bibliography{references}

\section*{Checklist}

\begin{enumerate}

\item For all authors...
\begin{enumerate}
  \item Do the main claims made in the abstract and introduction accurately reflect the paper's contributions and scope?
    \answerYes{}
  \item Did you describe the limitations of your work?
    \answerYes{See our Appendix.}
  \item Did you discuss any potential negative societal impacts of your work?
    \answerYes{See our Appendix.}
  \item Have you read the ethics review guidelines and ensured that your paper conforms to them?
    \answerYes{}
\end{enumerate}

\item If you are including theoretical results...
\begin{enumerate}
  \item Did you state the full set of assumptions of all theoretical results?
    \answerYes{See Sec.~\ref{sec:theory}.}
        \item Did you include complete proofs of all theoretical results?
    \answerYes{See Sec.~\ref{sec:theory}.}
\end{enumerate}

\item If you ran experiments...
\begin{enumerate}
  \item Did you include the code, data, and instructions needed to reproduce the main experimental results (either in the supplemental material or as a URL)?
    \answerYes{See our Appendix.}
  \item Did you specify all the training details (e.g., data splits, hyperparameters, how they were chosen)?
    \answerYes{See our Appendix.}
        \item Did you report error bars (e.g., with respect to the random seed after running experiments multiple times)?
    \answerYes{\textit{All} of our CIFAR100/Tiny ImageNet/ImageNet100 results are averaged by at least three random runs, mean and stddev reported.}
        \item Did you include the total amount of compute and the type of resources used (e.g., type of GPUs, internal cluster, or cloud provider)?
    \answerYes{See our Appendix.}
\end{enumerate}

\item If you are using existing assets (e.g., code, data, models) or curating/releasing new assets...
\begin{enumerate}
  \item If your work uses existing assets, did you cite the creators?
    \answerYes{}
  \item Did you mention the license of the assets?
    \answerYes{See our Appendix.}
  \item Did you include any new assets either in the supplemental material or as a URL?
    \answerYes{We include the code link.}
  \item Did you discuss whether and how consent was obtained from people whose data you're using/curating?
    \answerYes{The data we use are all publicly available.}
  \item Did you discuss whether the data you are using/curating contains personally identifiable information or offensive content?
    \answerYes{The data we use contains \textit{no} personally identifiable information or offensive content.}
\end{enumerate}

\item If you used crowdsourcing or conducted research with human subjects...
\begin{enumerate}
  \item Did you include the full text of instructions given to participants and screenshots, if applicable?
    \answerNo{No crowdsourcing in this work.}
  \item Did you describe any potential participant risks, with links to Institutional Review Board (IRB) approvals, if applicable?
    \answerNo{No crowdsourcing in this work.}
  \item Did you include the estimated hourly wage paid to participants and the total amount spent on participant compensation?
    \answerNo{No crowdsourcing in this work.}
\end{enumerate}

\end{enumerate}

\appendix
\section{Appendix}

\subsection{More Results} \label{subsec:more_results}
\textbf{Detailed numerical results of S.~test loss and T.~stddev on CIFAR100 and Tiny ImageNet}. See Tab.~\ref{tab:Tstddev_vs_Stestloss_cifar100_1}, Tab.~\ref{tab:Tstddev_vs_Stestloss_cifar100_2}, Tab.~\ref{tab:Tstddev_vs_Stestloss_tinyimagenet_1}, Tab.~\ref{tab:Tstddev_vs_Stestloss_tinyimagenet_2}. These tables are the numerical results that we use to plot Fig.~\ref{fig:Tstddev_vs_Stestloss_cifar100_tinyimagenet}.
\begin{table}[h]
\centering
\caption{Student test loss (S.~test loss) and the stddev of teacher's mean probability (T.~stddev, $\times 10^{-3}$) comparison on \textbf{CIFAR100} when using different DA schemes.}
\vspace{-2mm}
\resizebox{\linewidth}{!}{
\setlength{\tabcolsep}{0.5mm}
\begin{tabular}{lccccccccc}
\toprule
Teacher             & wrn\_40\_2 & wrn\_40\_2 & wrn\_40\_2 & resnet56 & resnet56 & resnet56 \\
Student             & /        & wrn\_16\_2 & vgg8 & / & resnet20 & ShuffleV2 \\
Metric              & T.~stddev & S.~test loss & S.~test loss & T.~stddev & S.~test loss & S.~test loss \\
\midrule
KD+Identity & 5.473$_{\pm0.002}$ & 1.0976$_{\pm0.0136}$ & 1.1830$_{\pm0.0065}$ & 5.248$_{\pm0.004}$ & 1.1783$_{\pm0.0081}$ & 0.9785$_{\pm0.0137}$ \\
KD+Flip & 5.471$_{\pm0.004}$ & 1.0774$_{\pm0.0101}$ & 1.1673$_{\pm0.0060}$ & 5.232$_{\pm0.003}$ & 1.1668$_{\pm0.0060}$ & 0.9961$_{\pm0.0072}$ \\
KD+Crop+Flip & 5.410$_{\pm0.009}$ & 1.0837$_{\pm0.0097}$ & 1.1446$_{\pm0.0182}$ & 5.121$_{\pm0.006}$ & 1.1763$_{\pm0.0092}$ & 0.9736$_{\pm0.0062}$ \\
KD+Cutout & 5.255$_{\pm0.009}$ & 1.0564$_{\pm0.0090}$ & 1.1306$_{\pm0.0156}$ & 4.983$_{\pm0.012}$ & 1.1687$_{\pm0.0115}$ & 0.9541$_{\pm0.0088}$ \\
KD+AutoAugment & 5.110$_{\pm0.004}$ & 1.0305$_{\pm0.0290}$ & 1.1102$_{\pm0.0016}$ & 4.904$_{\pm0.007}$ & 1.1478$_{\pm0.0041}$ & 0.9355$_{\pm0.0099}$ \\
KD+Mixup & 4.988$_{\pm0.012}$ & 1.0486$_{\pm0.0225}$ & 1.0917$_{\pm0.0044}$ & 4.719$_{\pm0.002}$ & 1.1621$_{\pm0.0121}$ & 0.9703$_{\pm0.0060}$ \\
KD+CutMix & 4.719$_{\pm0.005}$ & 1.0275$_{\pm0.0112}$ & 1.0657$_{\pm0.0145}$ & 4.443$_{\pm0.002}$ & 1.1440$_{\pm0.0040}$ & 0.9348$_{\pm0.0107}$ \\
KD+CutMixPick (S.~ent.) & 4.400$_{\pm0.007}$ & 1.0054$_{\pm0.0118}$ & 1.0471$_{\pm0.0111}$ & 4.039$_{\pm0.004}$ & 1.1269$_{\pm0.0095}$ & 0.9339$_{\pm0.0070}$ \\
KD+CutMixPick (T.~ent.) & 4.154$_{\pm0.005}$ & 0.9746$_{\pm0.0073}$ & 0.9928$_{\pm0.0061}$ & 3.788$_{\pm0.004}$ & 1.0924$_{\pm0.0085}$ & 0.9038$_{\pm0.0148}$ \\
\bottomrule
\end{tabular}}
\label{tab:Tstddev_vs_Stestloss_cifar100_1}
\vspace{-4mm}
\end{table}

\begin{table}[h]
\centering
\caption{\textbf{Continued}: Student test loss (S.~test loss) and the stddev of teacher's mean probability (T.~stddev, $\times 10^{-3}$) comparison on \textbf{CIFAR100} when using different DA schemes.}
\vspace{-2mm}
\resizebox{\linewidth}{!}{
\setlength{\tabcolsep}{0.5mm}
\begin{tabular}{lccccccccc}
\toprule
Teacher             & vgg13 & vgg13 & vgg13 & resnet32x4 & resnet32x4 & resnet32x4 & ResNet50 & ResNet50 \\
Student             & / & vgg8 & MobileNetV2 & / & resnet8x4 & ShuffleV2 & / & vgg8 \\
Metric              & T.~stddev & S.~test loss & S.~test loss & T.~stddev & S.~test loss & S.~test loss & T.~stddev & S.~test loss \\
\midrule
KD+Identity & 5.519$_{\pm0.006}$ & 1.1856$_{\pm0.0196}$ & 1.6021$_{\pm0.0366}$ & 5.524$_{\pm0.015}$ & 1.0973$_{\pm0.0134}$ & 1.3381$_{\pm0.0072}$ & 5.530$_{\pm0.005}$ & 1.2418$_{\pm0.0067}$ \\
KD+Flip & 5.516$_{\pm0.002}$ & 1.1754$_{\pm0.0105}$ & 1.5779$_{\pm0.0076}$ & 5.530$_{\pm0.009}$ & 1.0967$_{\pm0.0038}$ & 1.3208$_{\pm0.0172}$ & 5.532$_{\pm0.002}$ & 1.2110$_{\pm0.0024}$ \\
KD+Crop+Flip & 5.457$_{\pm0.010}$ & 1.1537$_{\pm0.0059}$ & 1.5244$_{\pm0.0149}$ & 5.498$_{\pm0.004}$ & 1.0685$_{\pm0.0031}$ & 1.3091$_{\pm0.0283}$ & 5.494$_{\pm0.013}$ & 1.1976$_{\pm0.0160}$ \\
KD+Cutout & 5.295$_{\pm0.012}$ & 1.1279$_{\pm0.0055}$ & 1.4963$_{\pm0.0104}$ & 5.356$_{\pm0.014}$ & 1.0627$_{\pm0.0063}$ & 1.2528$_{\pm0.0127}$ & 5.370$_{\pm0.004}$ & 1.1925$_{\pm0.0128}$ \\
KD+AutoAugment & 5.176$_{\pm0.012}$ & 1.1273$_{\pm0.0213}$ & 1.4137$_{\pm0.0440}$ & 5.198$_{\pm0.015}$ & 1.0289$_{\pm0.0078}$ & 1.1353$_{\pm0.0164}$ & 5.248$_{\pm0.011}$ & 1.1443$_{\pm0.0064}$ \\
KD+Mixup & 5.075$_{\pm0.001}$ & 1.1194$_{\pm0.0176}$ & 1.4009$_{\pm0.0044}$ & 5.082$_{\pm0.004}$ & 1.0182$_{\pm0.0128}$ & 1.0840$_{\pm0.0229}$ & 5.083$_{\pm0.006}$ & 1.1188$_{\pm0.0133}$ \\
KD+CutMix & 4.665$_{\pm0.009}$ & 1.0862$_{\pm0.0140}$ & 1.2846$_{\pm0.0178}$ & 4.851$_{\pm0.016}$ & 1.0166$_{\pm0.0096}$ & 1.0544$_{\pm0.0095}$ & 4.870$_{\pm0.002}$ & 1.1161$_{\pm0.0026}$ \\
KD+CutMixPick (S.~ent.) & 4.300$_{\pm0.016}$ & 1.0605$_{\pm0.0249}$ & 1.2739$_{\pm0.0064}$ & 4.590$_{\pm0.006}$ & 0.9888$_{\pm0.0015}$ & 1.0270$_{\pm0.0047}$ & 4.656$_{\pm0.012}$ & 1.0851$_{\pm0.0101}$ \\
KD+CutMixPick (T.~ent.) & 4.042$_{\pm0.007}$ & 0.9981$_{\pm0.0087}$ & 1.1876$_{\pm0.0233}$ & 4.323$_{\pm0.007}$ & 0.9485$_{\pm0.0116}$ & 0.9570$_{\pm0.0122}$ & 4.397$_{\pm0.006}$ & 1.0069$_{\pm0.0027}$ \\
\bottomrule
\end{tabular}}
\vspace{-2mm}
\label{tab:Tstddev_vs_Stestloss_cifar100_2}
\end{table}

\begin{table}[t]
\centering
\caption{Student test loss (S.~test loss) and the stddev of teacher's mean probability (T.~stddev, $\times 10^{-3}$) comparison on \textbf{Tiny ImageNet} when using different DA schemes.}
\vspace{-2mm}
\resizebox{\linewidth}{!}{
\setlength{\tabcolsep}{0.5mm}
\begin{tabular}{lccccccccc}
\toprule
Teacher             & wrn\_40\_2 & wrn\_40\_2 & wrn\_40\_2 & resnet56 & resnet56 & resnet56 \\
Student             & /        & wrn\_16\_2 & vgg8 & / & resnet20 & ShuffleV2 \\
Metric              & T.~stddev & S.~test loss & S.~test loss & T.~stddev & S.~test loss & S.~test loss \\
\midrule
KD+Identity & 3.482$_{\pm0.002}$ & 1.7676$_{\pm0.0079}$ & 1.7374$_{\pm0.0111}$ & 2.994$_{\pm0.001}$ & 1.9459$_{\pm0.0065}$ & 1.5600$_{\pm0.0077}$ \\
KD+Flip & 3.473$_{\pm0.003}$ & 1.7567$_{\pm0.0099}$ & 1.7322$_{\pm0.0107}$ & 2.978$_{\pm0.001}$ & 1.9352$_{\pm0.0127}$ & 1.5629$_{\pm0.0034}$ \\
KD+Crop+Flip & 3.385$_{\pm0.004}$ & 1.7599$_{\pm0.0083}$ & 1.7283$_{\pm0.0083}$ & 2.893$_{\pm0.000}$ & 1.9343$_{\pm0.0060}$ & 1.5593$_{\pm0.0048}$ \\
KD+Cutout & 3.305$_{\pm0.002}$ & 1.7587$_{\pm0.0055}$ & 1.7086$_{\pm0.0041}$ & 2.845$_{\pm0.003}$ & 1.9369$_{\pm0.0044}$ & 1.5636$_{\pm0.0052}$ \\
KD+AutoAugment & 2.953$_{\pm0.002}$ & 1.7358$_{\pm0.0131}$ & 1.7086$_{\pm0.0386}$ & 2.610$_{\pm0.002}$ & 1.9569$_{\pm0.0071}$ & 1.5782$_{\pm0.0085}$ \\
KD+Mixup & 2.999$_{\pm0.002}$ & 1.7278$_{\pm0.0057}$ & 1.7024$_{\pm0.0066}$ & 2.593$_{\pm0.004}$ & 1.9201$_{\pm0.0075}$ & 1.5765$_{\pm0.0036}$ \\
KD+CutMix & 2.921$_{\pm0.005}$ & 1.7296$_{\pm0.0025}$ & 1.6905$_{\pm0.0033}$ & 2.494$_{\pm0.005}$ & 1.9136$_{\pm0.0020}$ & 1.5605$_{\pm0.0024}$ \\
KD+CutMixPick (S.~ent.) & 2.609$_{\pm0.002}$ & 1.6986$_{\pm0.0071}$ & 1.6524$_{\pm0.0058}$ & 2.148$_{\pm0.001}$ & 1.8985$_{\pm0.0059}$ & 1.5580$_{\pm0.0056}$ \\
KD+CutMixPick (T.~ent.) & 2.386$_{\pm0.002}$ & 1.6849$_{\pm0.0055}$ & 1.6349$_{\pm0.0060}$ & 1.972$_{\pm0.003}$ & 1.8817$_{\pm0.0040}$ & 1.5429$_{\pm0.0060}$ \\
\bottomrule
\end{tabular}}
\label{tab:Tstddev_vs_Stestloss_tinyimagenet_1}
\vspace{-2mm}
\end{table}

\begin{table}[t]
\centering
\caption{\textbf{Continued}: Student test loss (S.~test loss) and the stddev of teacher's mean probability (T.~stddev, $\times 10^{-3}$) comparison on \textbf{Tiny ImageNet} when using different DA schemes.}
\vspace{-2mm}
\resizebox{\linewidth}{!}{
\setlength{\tabcolsep}{0.5mm}
\begin{tabular}{lccccccccc}
\toprule
Teacher             & vgg13 & vgg13 & vgg13 & resnet32x4 & resnet32x4 & resnet32x4 & ResNet50 & ResNet50 \\
Student             & / & vgg8 & MobileNetV2 & / & resnet8x4 & ShuffleV2 & / & vgg8 \\
Metric              & T.~stddev & S.~test loss & S.~test loss & T.~stddev & S.~test loss & S.~test loss & T.~stddev & S.~test loss \\
\midrule
KD+Identity & 3.873$_{\pm0.001}$ & 1.8377$_{\pm0.0021}$ & 2.0258$_{\pm0.0211}$ & 3.847$_{\pm0.004}$ & 1.9859$_{\pm0.0108}$ & 1.7404$_{\pm0.2064}$ & 3.903$_{\pm0.001}$ & 1.9767$_{\pm0.0230}$ \\
KD+Flip & 3.873$_{\pm0.002}$ & 1.8379$_{\pm0.0207}$ & 1.9946$_{\pm0.0180}$ & 3.842$_{\pm0.005}$ & 1.9531$_{\pm0.0049}$ & 1.6092$_{\pm0.0077}$ & 3.901$_{\pm0.001}$ & 1.9674$_{\pm0.0104}$ \\
KD+Crop+Flip & 3.808$_{\pm0.004}$ & 1.8346$_{\pm0.0108}$ & 1.9707$_{\pm0.0089}$ & 3.784$_{\pm0.002}$ & 1.9725$_{\pm0.0092}$ & 1.6009$_{\pm0.0111}$ & 3.860$_{\pm0.003}$ & 1.9249$_{\pm0.0132}$ \\
KD+Cutout & 3.734$_{\pm0.066}$ & 1.7932$_{\pm0.0137}$ & 1.9544$_{\pm0.0101}$ & 3.688$_{\pm0.001}$ & 1.9872$_{\pm0.0047}$ & 1.5900$_{\pm0.0092}$ & 3.773$_{\pm0.003}$ & 1.9091$_{\pm0.0223}$ \\
KD+AutoAugment & 3.284$_{\pm0.003}$ & 1.7310$_{\pm0.0200}$ & 1.7970$_{\pm0.0445}$ & 3.269$_{\pm0.003}$ & 1.9037$_{\pm0.0248}$ & 1.4969$_{\pm0.0159}$ & 3.332$_{\pm0.005}$ & 1.7750$_{\pm0.0255}$ \\
KD+Mixup & 3.388$_{\pm0.002}$ & 1.7757$_{\pm0.0168}$ & 1.8587$_{\pm0.0141}$ & 3.326$_{\pm0.002}$ & 1.9191$_{\pm0.0136}$ & 1.5368$_{\pm0.0108}$ & 3.413$_{\pm0.001}$ & 1.8281$_{\pm0.0309}$ \\
KD+CutMix & 3.173$_{\pm0.001}$ & 1.7259$_{\pm0.0189}$ & 1.8269$_{\pm0.0423}$ & 3.275$_{\pm0.003}$ & 1.9420$_{\pm0.0082}$ & 1.5350$_{\pm0.0083}$ & 3.373$_{\pm0.004}$ & 1.8225$_{\pm0.0101}$ \\
KD+CutMixPick (S.~ent.) & 2.905$_{\pm0.000}$ & 1.6556$_{\pm0.0061}$ & 1.7509$_{\pm0.0080}$ & 3.086$_{\pm0.001}$ & 1.8355$_{\pm0.0144}$ & 1.5419$_{\pm0.0376}$ & 3.197$_{\pm0.001}$ & 1.7290$_{\pm0.0077}$ \\
KD+CutMixPick (T.~ent.) & 2.681$_{\pm0.005}$ & 1.6237$_{\pm0.0094}$ & 1.6839$_{\pm0.0159}$ & 2.812$_{\pm0.002}$ & 1.8368$_{\pm0.0060}$ & 1.4317$_{\pm0.0096}$ & 2.948$_{\pm0.004}$ & 1.6724$_{\pm0.0113}$ \\
\bottomrule
\end{tabular}}
\label{tab:Tstddev_vs_Stestloss_tinyimagenet_2}
\vspace{-4mm}
\end{table}

\textbf{ImageNet100 and ImageNet Results}. See Fig.~\ref{fig:Tstddev_vs_Stestloss_imagenet100_imagenet}, Tab.~\ref{tab:Tstddev_vs_Stestloss_imagenet100} and Tab.~\ref{tab:Tstddev_vs_Stestloss_imagenet} for the results. In general, we find it is harder to verify the proposed proposition on these two more challenging datasets -- As seen, the coefficients turn smaller while p-values larger than the cases on CIFAR100 and Tiny ImageNet. On ImageNet100, the p-values are below (or very close to) 5\%, suggesting the correlation is still strong. On ImageNet, however, the results show a \textit{weak} correlation between T.~stddev and S.~test loss, against our theory. Currently, we are not very sure why this happens sorely on ImageNet. After all, ImageNet100 is a subset of ImageNet and the proposed theory works fairly well on ImageNet100. One possible reason may be -- full ImageNet historically was shown especially hard to make KD work, \eg, in~\cite{cho2019efficacy}, they mentioned ``\textit{(Page 4) ...it is still a mystery why \ul{no teacher improves accuracy on ImageNet}. Despite multiple recent papers in knowledge distillation, experiments on ImageNet are rarely reported. The few that do report find that standard setting of knowledge distillation fails on ImageNet [26] or perform an experiment with a small portion of ImageNet [21]}''. Besides, we may notice the authors of CRD~\cite{tian2019contrastive} mentioned in their GitHub issue\footnote{https://github.com/HobbitLong/RepDistiller/issues/10\#issuecomment-563078837}: ``\textit{I have been struggling a bit to get KD work as well on ImageNet with ResNet-18 as the student network}''. We conceive that the weak correlation on ImageNet may be related to this KD underperformance on ImageNet and may be related to the number of classes of the dataset. We shall continue to investigate this in our future work.
\begin{figure}[h]
\centering
\resizebox{0.96\linewidth}{!}{
\setlength{\tabcolsep}{0.5mm}
\begin{tabular}{c@{\hspace{0.01\linewidth}}c@{\hspace{0.01\linewidth}}c@{\hspace{0.01\linewidth}}c}
    \includegraphics{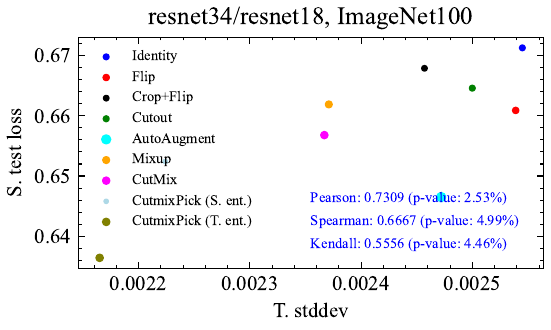} &
    \includegraphics{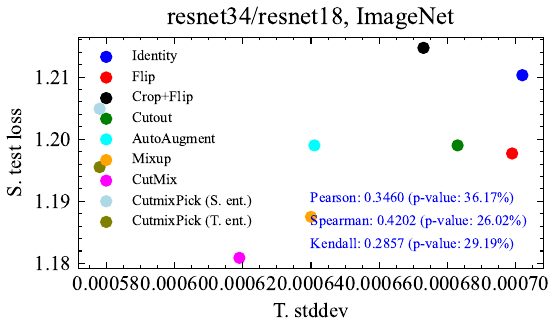} \\
\end{tabular}}
\vspace{-2mm}
\caption{Scatter plots of T.~stddev \textit{vs.}~S.~test loss of different pairs on \textbf{ImageNet100} and \textbf{ImageNet}.}
\label{fig:Tstddev_vs_Stestloss_imagenet100_imagenet}
\end{figure}

\begin{table}[h]
\centering
\caption{Student test loss (S.~test loss) and the stddev of teacher's mean probability (T.~stddev, $\times 10^{-3}$) comparison on \textbf{ImageNet100} (a subset with 100 classes randomly drawn from ImageNet) when using different data augmentation schemes.}
\vspace{-2mm}
\resizebox{0.6\linewidth}{!}{
\setlength{\tabcolsep}{3mm}
\begin{tabular}{lccccccccc}
\toprule
Teacher             & resnet34 & resnet34 \\
Student             & / & resnet18  \\
Metric              & T.~stddev & S.~test loss \\
\midrule        
KD+Identity & 2.545$_{\pm0.000}$ & 0.6713$_{\pm0.0029}$ \\
KD+Flip & 2.539$_{\pm0.004}$ & 0.6609$_{\pm0.0035}$ \\
KD+Crop+Flip & 2.457$_{\pm0.008}$ & 0.6679$_{\pm0.0027}$ \\
KD+Cutout & 2.500$_{\pm0.005}$ & 0.6646$_{\pm0.0030}$ \\
KD+AutoAugment & 2.472$_{\pm0.007}$ & 0.6465$_{\pm0.0078}$ \\
KD+Mixup & 2.371$_{\pm0.005}$ & 0.6619$_{\pm0.0043}$ \\
KD+CutMix & 2.367$_{\pm0.003}$ & 0.6568$_{\pm0.0048}$ \\
KD+CutMixPick (S.~ent.) & 2.224$_{\pm0.006}$ & 0.6524$_{\pm0.0015}$ \\
KD+CutMixPick (T.~ent.) & 2.165$_{\pm0.000}$ & 0.6364$_{\pm0.0049}$ \\
\bottomrule
\end{tabular}}
\label{tab:Tstddev_vs_Stestloss_imagenet100}
\vspace{-4mm}
\end{table}

\begin{table}[h]
\centering
\caption{Student test loss (S.~test loss) and the stddev of teacher's mean probability (T.~stddev, $\times 10^{-3}$) comparison on \textbf{ImageNet} when using different data augmentation schemes. \emph{The S.~test losses are averaged by the last 5 epochs to mitigate the random variation}.}
\vspace{-2mm}
\resizebox{0.6\linewidth}{!}{
\setlength{\tabcolsep}{3mm}
\begin{tabular}{lccccccccc}
\toprule
Teacher             & resnet34 & resnet34 \\
Student             & / & resnet18  \\
Metric              & T.~stddev & S.~test loss \\
\midrule        
KD+Identity & 0.702$_{\pm0.000}$ & 1.2103 \\
KD+Flip & 0.699$_{\pm0.000}$ & 1.1977 \\
KD+Crop+Flip & 0.673$_{\pm0.000}$ & 1.2147 \\
KD+Cutout & 0.683$_{\pm0.000}$ & 1.1990 \\
KD+AutoAugment & 0.641$_{\pm0.000}$ & 1.1990 \\
KD+Mixup & 0.640$_{\pm0.001}$ & 1.1875 \\
KD+CutMix & 0.619$_{\pm0.001}$ & 1.1809 \\
KD+CutMixPick (S.~ent.) & 0.578$_{\pm0.000}$ & 1.2049 \\
KD+CutMixPick (T.~ent.) & 0.578$_{\pm0.000}$ & 1.1955 \\
\bottomrule
\end{tabular}}
\label{tab:Tstddev_vs_Stestloss_imagenet}
\vspace{-4mm}
\end{table}

\textbf{Discussion: This paper discovers that ``more correlation in the data, worse generalization ability''. Is this an already-known fact in statistical learning~\cite{kearns1994introduction,vapnik2013nature}?} Our work is \textit{starkly different} from the established theory in~\cite{kearns1994introduction,vapnik2013nature} in that they study the dependency \textit{in the input data}, while an important point in our theory is that we consider the \textit{teacher's output} of the input, \textit{not} the input per se. Namely, the proposed theory must be discussed \textit{in the context of KD}. Existing works~\cite{kearns1994introduction,vapnik2013nature} clearly are not in this scope.

\textbf{Use test accuracy~\vs~test loss as the measure of student's performance}. In the paper, we mentioned we use test loss instead of test accuracy as the measure of student's performance. The primary consideration, as discussed in the main paper, is that accuracy may be misaligned with loss sometimes (\ie, ideally we expect \textit{higher accuracy} coincides with \textit{lower loss}; while in practice, we observe several counterexamples). Here we show an example of calculating the correlation between T.~stddev and S.~test \textit{accuracy} in Fig.~\ref{fig:Tstddev_vs_Sacc1_vgg13vgg8_cifar100}. As seen, if accuracy used, we observe no positive correlation between T.~stddev and S.~performance (which is the S.~test accuracy here), while the correlation is strong if we use test loss. This shows the importance of using the \textit{correct} measure for student's performance in a theoretical investigation of KD problems, as also noted by~\cite{menon2021statistical}.
\begin{figure*}[h]
\centering
\begin{tabular}{cc}
\includegraphics[width=0.5\linewidth]{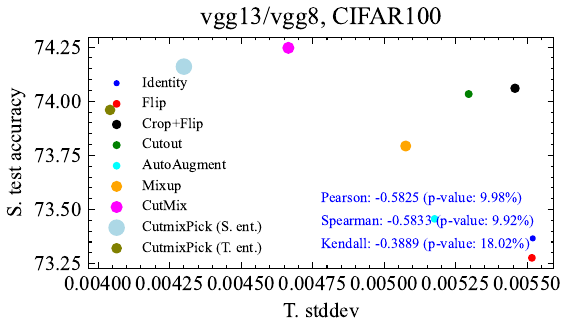}
\end{tabular}
\caption{Showcase of using test \textit{accuracy} instead of test \textit{loss} as the measure of student's performance to conduct the T.~stddev~\vs~S.~performance correlation analysis in our paper.}
\label{fig:Tstddev_vs_Sacc1_vgg13vgg8_cifar100}
\vspace{-4mm}
\end{figure*}

\textbf{KD~\vs~CE loss on wrn\_40\_2/wrn\_16\_2 and vgg13/vgg8}. See Fig.~\ref{fig:wrn_vgg_train_time}. These plots show more examples that we can harvest considerable performance gain simply by using a strong DA with prolonged training.
\begin{figure*}[h]
\centering
\begin{tabular}{cc}
\includegraphics[width=0.47\linewidth]{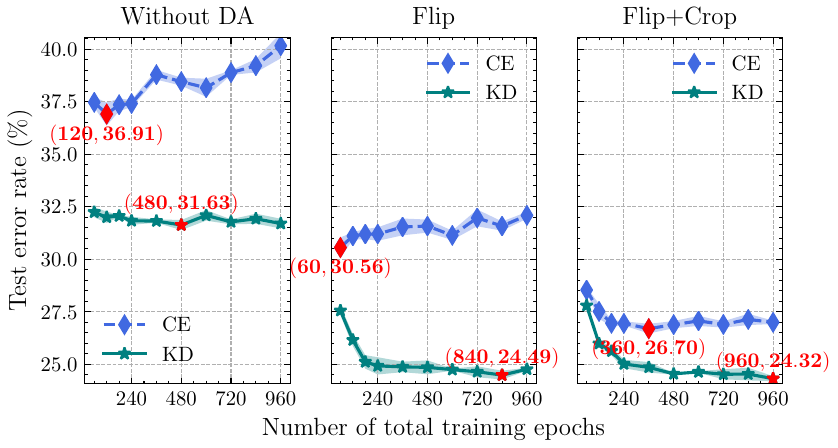} &
\includegraphics[width=0.47\linewidth]{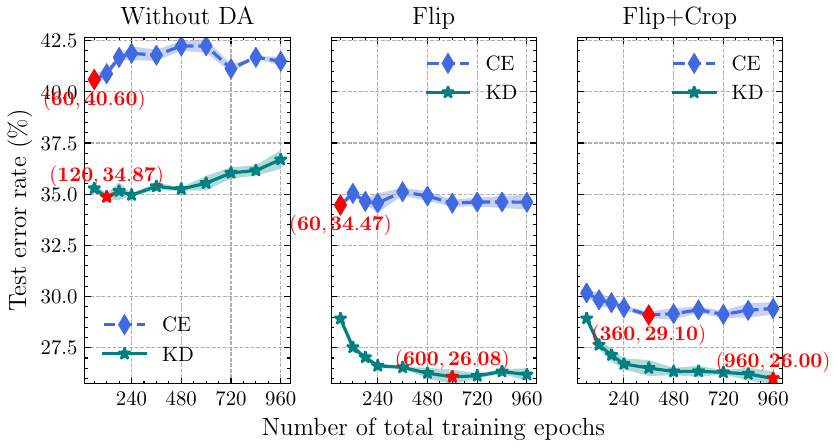} \\
(a) wrn\_16\_2 (teacher: wrn\_40\_2) & (b) vgg8 (teacher: vgg13)
\end{tabular}
\caption{Test error rate of wrn\_16\_2 and vgg8 on CIFAR100 when trained for different numbers of epochs, using KD or cross-entropy (CE) loss, with or without data augmentation (DA). Every error rate is averaged by 3 random runs (shaded area indicates the stddevdev). Consistent with Fig.~\ref{fig:resnet20_train_time} in the main text, when DA is used, the optimal number of epochs is postponed and postponed more for KD than CE. When a stronger DA is used, the optimal number of epochs is postponed even more with smaller optimal test loss.}
\label{fig:wrn_vgg_train_time}
\vspace{-4mm}
\end{figure*}

\subsection{More Explanations from Eq.~(\ref{eq:covariance}) to Eq.~(\ref{eq:covariance_only_t})} \label{subsec:more_explanations}
In Proposition~\ref{the:proposition}, the main idea is to have lower covariance among samples (in terms of the teacher's outputs), as suggested by Eq.~(\ref{eq:covariance}). Then, it seems that the most straightforward idea is to use the covariance (or possibly the normalized covariance, \ie, correlation) of the teacher's output as metric to capture the dependency among samples. Why does the paper not use this idea?

We first show this idea does not work as well as our proposed metric. Then we explain why.

Given a batch of input images, the teacher's output probability is $P \in \mathbb{R}^{N\times C}$, where $N$ is the batch size, $C$ is the number of classes. Then we consider the covariance and correlation of $P$ and try to construct a new metric from them:
\begin{equation}
\begin{split}
    \mathrm{Cov}(P, P) \in \mathbb{R}^{N\times N} \Rightarrow v = \mathrm{Cov}(P, P).\mathrm{mean}() \in \mathbb{R}, \\
    \mathrm{Cor}(P, P) \in \mathbb{R}^{N\times N} \Rightarrow r = \mathrm{Cor}(P, P).\mathrm{mean}() \in \mathbb{R}. \\
\end{split}
\end{equation}
We collect this (mean) covariance $v$ (a scalar) and correlation $r$ (a scalar) over many batches and then take the average of all the collected $v$ and $r$,
\begin{equation}
\begin{split}
    \bar{v} = \frac{1}{K}\sum_{k \in [K]} v_k, \\
    \bar{r} = \frac{1}{K}\sum_{k \in [K]} r_k.
\end{split}
\end{equation}
where $K$ is the total number of batches (10 epochs in our experiments, $K=7,818$). 

Then, we replace the proposed ``T.~stddev'' with $\bar{v}$ and $\bar{r}$ in Tab.~\ref{tab:Tstddev_vs_Stestloss_cifar100_1}, resulting in two new tables: Tab.~\ref{tab:cov_cifar100} and Tab.~\ref{tab:cor_cifar100}.
\begin{table}[h]
\centering
\caption{Student test loss (S.~test loss) and the \textbf{covariance of the teacher's output ($\bar{v}$)} comparison on CIFAR100 when using different DA schemes. This table is a replica of Tab.~\ref{tab:Tstddev_vs_Stestloss_cifar100_1} just replacing the ``T.~stddev'' with $\bar{v}$ here. The up (or down) arrow indicates the result is higher (or lower) than the one \emph{right above it}. If ``S.~test loss'' change is at a different direction from the ``$\bar{v}$'' change, we highlight the arrow in \RE{red}.}
\resizebox{\linewidth}{!}{
\setlength{\tabcolsep}{0.5mm}
\begin{tabular}{lccccccccc}
\toprule
Teacher             & wrn\_40\_2 & wrn\_40\_2 & wrn\_40\_2 & resnet56 & resnet56 & resnet56 & vgg13 & vgg13 & vgg13 \\
Student             & /        & wrn\_16\_2 & vgg8 & / & resnet20 & ShuffleV2 & / & vgg8 & MobileNetV2 \\
Metric & $\bar{v}$ & S.~test loss & S.~test loss & $\bar{v}$ & S.~test loss & S.~test loss & $\bar{v}$ & S.~test loss & S.~test loss \\
\midrule
KD+Identity           & 0.00015322    & 1.0976    & 1.1830      & 0.00014041    & 1.1783    &0.9785    & 0.00015447    & 1.1856      &1.6021 \\
KD+Flip               & 0.00015258\DA & 1.0774\DA & 1.1673\DA   & 0.00013956\DA & 1.1668\DA &0.9961\UA & 0.00015522\UA & 1.1754\DA   &1.5779\DA \\
KD+Flip+Crop          & 0.00014922\DA & 1.0837\UA & 1.1446\DA   & 0.00013380\DA & 1.1763\UA &0.9736\DA & 0.00015133\DA & 1.1537\DA   &1.5244\DA \\
KD+Cutout             & 0.00014114\DA & 1.0564\DA & 1.1306\DA   & 0.00012680\DA & 1.1687\DA &0.9541\DA  & 0.00014300\DA & 1.1279\DA   &1.4963\DA \\
KD+AutoAugment        & 0.00013504\DA & 1.0305\DA & 1.1102\DA   & 0.00013046\UA & 1.1478\DA &0.9355\DA & 0.00014306\UA & 1.1273\DA   &1.4137\DA \\
KD+Mixup              & 0.00012844\DA & 1.0507\UA & 1.0917\DA   & 0.00011760\DA & 1.1621\UA &0.9703\UA & 0.00013265\DA & 1.1194\DA   &1.4009\DA \\
KD+CutMix             & 0.00011725\DA & 1.0310\DA & 1.0657\DA   & 0.00010520\DA & 1.1424\DA &0.9348\DA & 0.00011377\DA & 1.0728\DA   &1.2846\DA \\
KD+CutMixPick (S.~ent.) & 0.00010184\DA & 1.0054\DA & 1.0471\DA   & 0.00008972\DA & 1.1269\DA &0.9339\DA & 0.00009727\DA & 1.0605\DA   &1.2739\DA\\
KD+CutMixPick (T.~ent.) & 0.00009197\DA & 0.9753\DA & 0.9928\DA   & 0.00007949\DA & 1.0853\DA &0.9038\DA & 0.00008676\DA & 0.9884\DA   &1.1876\DA\\
\bottomrule
\end{tabular}}
\label{tab:cov_cifar100}
\vspace{-2mm}
\end{table}

\begin{table}[h]
\centering
\caption{Student test loss (S.~test loss) and the \textbf{correlation coefficient of the teacher's output ($\bar{r}$)} comparison on CIFAR100 when using different DA schemes. This table is a replica of Tab.~\ref{tab:Tstddev_vs_Stestloss_cifar100_1} just replacing the ``T.~stddev'' with $\bar{r}$ here. The up (or down) arrow indicates the result is higher (or lower) than the one \emph{right above it}. If ``S.~test loss'' change is at a different direction from the ``$\bar{r}$'' change, we highlight the arrow in \RE{red}.}
\resizebox{\linewidth}{!}{
\setlength{\tabcolsep}{0.5mm}
\begin{tabular}{lccccccccc}
\toprule
Teacher             & wrn\_40\_2 & wrn\_40\_2 & wrn\_40\_2 & resnet56 & resnet56 & resnet56 & vgg13 & vgg13 & vgg13 \\
Student             & /        & wrn\_16\_2 & vgg8 & / & resnet20 & ShuffleV2 & / & vgg8 & MobileNetV2 \\
Metric & $\bar{r}$ & S.~test loss & S.~test loss & $\bar{r}$ & S.~test loss & S.~test loss & $\bar{r}$ & S.~test loss & S.~test loss \\
\midrule
KD+Identity           & 0.01570871    & 1.0976    & 1.1830      & 0.01570028    & 1.1783    &0.9785    & 0.01562722    & 1.1856      &1.6021 \\
KD+Flip               & 0.01564524\DA & 1.0774\DA & 1.1673\DA   & 0.01559539\DA & 1.1668\DA &0.9961\UA & 0.01570224\UA & 1.1754\DA   &1.5779\DA \\
KD+Flip+Crop          & 0.01559558\DA & 1.0837\UA & 1.1446\DA   & 0.01537652\DA & 1.1763\UA &0.9736\DA & 0.01559079\DA & 1.1537\DA   &1.5244\DA \\
KD+Cutout             & 0.01530729\DA & 1.0564\DA & 1.1306\DA   & 0.01510221\DA & 1.1687\DA &0.9541\DA  & 0.01531730\DA & 1.1279\DA   &1.4963\DA \\
KD+AutoAugment        & 0.01856617\UA & 1.0305\DA & 1.1102\DA   & 0.01971744\UA & 1.1478\DA &0.9355\DA & 0.01947365\UA & 1.1273\DA   &1.4137\DA \\
KD+Mixup              & 0.01497638\DA & 1.0507\UA & 1.0917\DA   & 0.01512188\DA & 1.1621\UA &0.9703\UA & 0.01507644\DA & 1.1194\DA   &1.4009\DA \\
KD+CutMix             & 0.01498818\UA & 1.0310\DA & 1.0657\DA   & 0.01502888\DA & 1.1424\DA &0.9348\DA & 0.01473911\DA & 1.0728\DA   &1.2846\DA \\
KD+CutMixPick (S.~ent.) & 0.01508117\UA & 1.0054\DA & 1.0471\DA   & 0.01536088\UA & 1.1269\DA &0.9339\DA & 0.01462629\DA & 1.0605\DA   &1.2739\DA\\
KD+CutMixPick (T.~ent.) & 0.01525488\UA & 0.9753\DA & 0.9928\DA   & 0.01564995\UA & 1.0853\DA &0.9038\DA & 0.01487075\UA & 0.9884\DA   &1.1876\DA\\
\bottomrule
\end{tabular}}
\label{tab:cor_cifar100}
\vspace{-2mm}
\end{table}

As seen, the foremost impression of Tab.~\ref{tab:cov_cifar100} and Tab.~\ref{tab:cor_cifar100} is that they have many \RE{red uparrows}, which implies the metric goes \textit{against} the supposed order, undermining its efficacy. \Eg, AutoAugment is stronger than Cutout (in that Autoaugment includes Cutout as a part; also AutoAugment performs better than Cutout as the test losses indicate). However, by $\bar{v}$ (see Tab.~\ref{tab:cov_cifar100}), AutoAugment is ranked ``\textit{weaker}'' than Cutout with teacher resnet56 and vgg13.

As for Tab.~\ref{tab:cor_cifar100}, the correlation metric $\bar{r}$ is even worse than $\bar{v}$ in the sense that it incurs even more \RE{red uparrows} than $\bar{v}$. Especially, CutMixPick (T.~ent.) is the best DA (delivering the lowest test loss) while by $\bar{r}$ it ``\textit{underperforms}'' all the other DA schemes except Identity and Autoaugment, with the teacher resnet56.

\textit{Why does the covariance metric $\bar{v}$ not work as well as our proposed metric?} The covariance metric arises directly from our proposition, if the proposition is correct, why does it not work well? Fundamentally, the reason is that the covariance among samples is \textit{hard} to estimate accurately. When we calculate the covariance matrix $\mathrm{Cov}(P, P)$, $P$ is of shape $N\times C$. Note, each row of $P$ is an \textit{attribute (or random variable, RV)} and each column of $P$ is an \textit{observation}\footnote{Interested readers are encouraged to check out this numpy covariance function, which we use to implement $\bar{v}$: https://numpy.org/doc/stable/reference/generated/numpy.cov.html}. The number of observations of $P$ is the number of classes, \textit{which is a constant} for a given dataset (in the current case, it is $100$ for the CIFAR100 dataset). Intuitively, given two random variables, in order to estimate the covariance between them, merely $100$ observations is not very abundant. Namely, the limited observations render the direct estimation of the covariance among samples \textit{inaccurate}. This can explain the counterexamples between $\bar{v}$ and test loss in Tab.~\ref{tab:cov_cifar100}.

(If the covariance metric is accurately estimated, it would not be surprising that the correlation in Tab.~\ref{tab:cor_cifar100} is inaccurate, either. Especially, the normalization itself may not be desired. We put the results of using correlation as metric simply for a reference.)

\textit{Then, how does the proposed metric resolve this ``limited observation'' problem?} The ``smart thing'' of the proposed metric is that it considers \textit{the sum of RVs} instead of the RV itself. In statistics, it is well-known that, when multiple RVs are added together, the variance (or equivalently, stddev) of the sum RV will take into account the covariance among its members, as shown below:

\begin{equation}
    \mathrm{Var}(\sum_{i=1}^N X_i) = \sum_{i=1}^N \mathrm{Var}(X_i) + 2 \sum_{1\le i < j \le N} \mathrm{Cov}(X_i, X_j),
\end{equation}
where $X_i$ (with a little abuse of notation here) is the teacher's output probability of the $i$-th example.  

Essentially, we want to estimate the covariance term in RHS (\ie, $\sum_{1\le i < j \le N} \mathrm{Cov}(X_i, X_j)$). Now, what we do in the paper is to use the LHS (\ie, $\mathrm{Var}(\sum_{i=1}^N X_i)$) as a \textit{proxy} of the covariance term. Clearly, there is a \textit{hidden assumption} here to allow us to use such a proxy: the first term of RHS (\ie, $\sum_{i=1}^N \mathrm{Var}(X_i)$) stays (nearly) the same for different samples (\ie, different batches) of $\{X_i\}$. In our work, there are two conditions to make this hidden assumption hold in practice. \textbf{(1)} The augmented images are based on the original images. We can still consider them in the same domain (in our Proposition~\ref{the:proposition}, $X_i$ is actually assumed to be drawn from the same distribution), so $\mathrm{Var}(X_i)$ is close for different $X_i$. \textbf{(2)} We have abundant samples ($N=640$ in our experiments for CIFAR100 and Tiny ImageNet). The sum effect of so many samples makes  $\sum_{i=1}^N \mathrm{Var}(X_i)$ stabilize to a constant. 

We can actually come up with counterexamples that \textit{intentionally} break the hidden assumption. \Eg, consider a ``bad'' DA which turns all input images to a \textit{constant} image. Then the input batch will become a repetition of just one image. When we input such a batch to the teacher, the teacher's output probability will be the same. Then the $\mathrm{Var}(\sum_{i=1}^N X_i)$ term will become zero. It cannot be a legitimate proxy for the covariance term anymore as the first term of RHS now is zero. 

We are aware of the existence of such counterexamples. Notably, they do \textit{not} affect the validity of the proposed metric. Here are the reasons -- For one thing, such bad DA rarely appears in practice. The strong correlation per se already demonstrates the efficacy of the proposed metric in practical cases. For another thing, there is an important design in our training setup to avoid such counterexamples -- Note we \textit{add} the augmented images to the original images instead of replacing them (see Fig.~\ref{fig:overview}). Therefore, in a batch, it is \textit{unlikely} that all the images degenerate to one image. Namely, the counterexample we just mentioned never appear in practice. 

Then how to understand such counterexample? Why it \textit{appears} to go against our theory? Fundamentally, the counterexample we just mentioned breaks one of the key assumptions of our proposition in the first place -- each individual sample (including the original image \textit{and the augmented image}) obeys the same true (unknown) data distribution $\mathcal{D}$. The bad DA transforms all input images to a constant image, which already means the individual sample is not drawn from the true data distribution since randomly drawing from the true data distribution will not give us the same image.

To further confirm this, we implement the ``bad'' DA and re-check the validity of the proposed metric $\bar{m}$ with wrn\_40\_2/wrn\_16\_2 on CIFAR100. When using the ``bad'' DA, we have the metric value $\bar{m} = 0.022439$. Compare this to Tab.~\ref{tab:Tstddev_vs_Stestloss_cifar100_1}, we will notice it is ranked the \textit{highest} (\ie, the \textit{worst}). This agrees with our expectation. 

In short, the counterexamples above do not affect the efficacy of our proposed metric in practice.

\subsection{Example of Calculating T.~Stddev} \label{subsec:calculation_Tstddev}
Take CIFAR100 for an example. It has 50,000 training samples. With batch size 64, one epoch amounts to 782 batches. During training, the $K$ in Eq.~(\ref{eq:teacher's_mean_prob}) is set to 10. During the iteration of training data loader, we collect the teacher's outputs. Every $K/2$ training batches (the division of 2 is because we \textit{append} the augmented images to the original images), we obtain a matrix of shape $[K \times \mathrm{batch\_size}, \mathrm{num\_classes}]$. Then we average this matrix along the 1st axis to give us a vector, \ie, the $\mathbf{u}$ in Eq.~(\ref{eq:teacher's_mean_prob}). In brief, every $K/2$ batches, we have one sample of $\mathbf{u}$. In total, we run the data loader for \textit{10 epochs} (10 is empirically set; more epochs should give us more accurate estimation but we just find 10 is good enough), which gives us 7820/5=1654 $\mathbf{u}$'s, \ie, a matrix of shape $[1654, \mathrm{num\_classes}]$. Then we calculate the variance along the 1st axis, giving us a vector of shape $[\mathrm{num\_classes}]$, \ie, the $\mathbf{m}$ in Eq.~(\ref{eq:Tstddev}). Finally, we average the stddev elements in $\mathbf{m}$ to obtain the proposed \textit{scalar} metric.

For more details, please check our code at: \href{http://github.com/MingSun-Tse/Good-DA-in-KD}{http://github.com/MingSun-Tse/Good-DA-in-KD}.

\subsection{CutMix Sample Analysis on ImageNet} \label{subsec:cutmix_samples_imagenet}
\noindent \bb{CutMix sample analysis and why KD is naturally suited to exploit CutMix}. During the KD training of resnet34/resnet18 on ImageNet, we recorded the CutMix samples on which the teacher \emph{disagrees} with the CutMix scheme on the label. We call this \emph{label disagreement issue}. 

As show in Fig.~\ref{fig:cutmix_samples}. there exist cases where the image cut from one image covers the salient object in the other. For example, the cab in (a) completely covers the ground beetle. In this case, using the label by CutMix does not make sense anymore. A similar problem appears on (b). Note that these misleading labels by CutMix are rectified when the teacher is employed to guide the student. The teacher assigns the correct label ``cab'' to (a) and ``Yorkshire terrier'' to (b) (which is still not the true label ``Tibetan terrier'' but it is clearly more relevant and ``Tibetan terrier'' is also in the top-5 predictions). For (c) and (d), they pose a problem more than occlusion: the foreground cut in (c) is labeled as ``acoustic guitar'', however, the cut is too small for us to make it out without knowing the label. Meanwhile, the background object ``Arabian camel'' is occluded. Then the grids in the picture turn out to be the most salient part. If we look at the predictions of the teacher, ``shopping cart'' and ``shopping basket'' clearly make more sense than either of the original two labels. A similar issue happens on (d), where the ``Indian elephant'' is largely occluded. The foreground cut is labeled ``quill'' but the bottle in the middle is more salient. Thus the teacher predicted it as ``coffeepot'', ``milk can'', \textit{etc}. 

In order to see how severe the label disagreement issue is, we counted the number of these synthetic samples and found that on \bb{more than half of the samples (52.1\%)} produced by CutMix, the teacher model and CutMix hold a different view regarding the label. Many of these suffer from the problem shown in Fig.~\ref{fig:cutmix_samples}. The KD loss can rectify these label mistakes. This further shows the interplay between KD and DA: KD thrives on DA and \emph{in turn, some DA schemes are more reasonable for KD} (than CE) where a teacher can supply more relevant labels.

\begin{figure}[h]
\centering
\renewcommand{\arraystretch}{0.3} %
\begin{tabular}{c}
    \includegraphics[width=\linewidth]{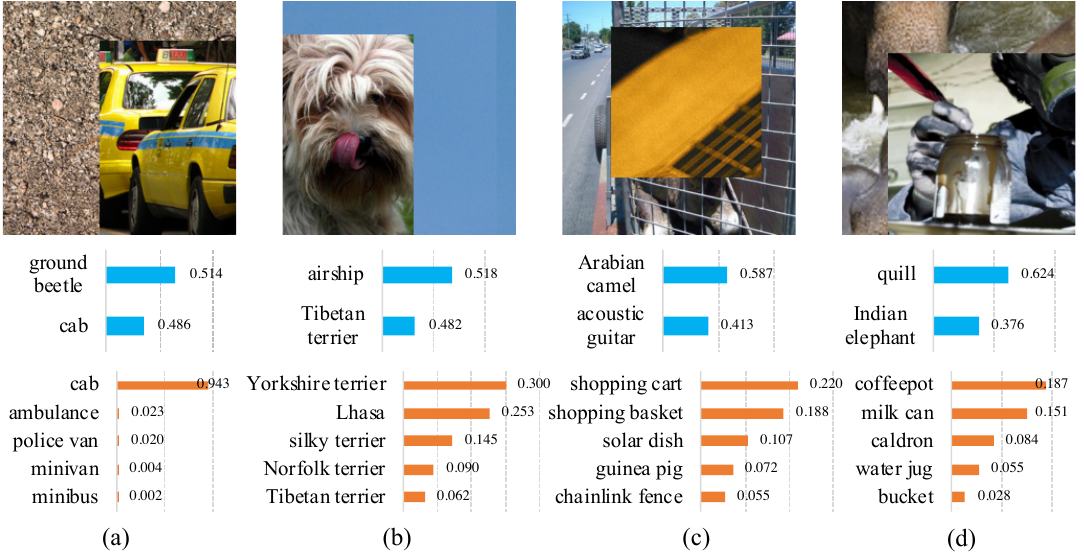} 
\end{tabular}
\caption{ImageNet CutMix samples where the main object in one of the images is no longer visible after CutMix augmentation. Below each sample, the first is the target probability assigned by CutMix and the second is the top-5 predicted probabilities by the teacher. These examples can be misleading when cross-entropy loss is used, but not for KD, as explained in the text.}
\label{fig:cutmix_samples}
\end{figure}

\subsection{Dataset License and Hardware Condition}
The four datasets used in this paper (CIFAR100, Tiny ImageNet, ImageNet100, ImageNet) are all publicly available. One KD experiment with one NIVIDA 2080Ti GPU on CIFAR100 takes around 2 to 6 hrs; while an experiment on Tiny ImageNet, ImageNet100, or ImageNet may take up to 24 hrs on the 2080Ti GPU. In general, \textit{the experimenting is not compute intensive}. For details, please refer to our GitHub code: \href{http://github.com/MingSun-Tse/Good-DA-in-KD}{http://github.com/MingSun-Tse/Good-DA-in-KD}.

\subsection{Limitations and Potential Negative Societal Impacts} \label{subsec:limitations}
\textbf{Limitations}.
The proposed metric is calculated with finite training samples. The precision of the metric values will necessarily depend on the number of training samples used. In this regard, we have tried our best to avoid the influence of statistical variations (using abundant samples). The presented results in the paper are shown stable. Especially, the correlation in Tabs.~\RE{1} and \RE{2} is strong. This fact itself is a proof that the limitation of our paper in this regard is small.

\textbf{Potential Negative Societal Impacts}. 
Simply put, this work proposes theories and algorithms that can boost the performance of a network with the aid of a larger network (\ie, the teacher-student distillation), that is, make it smaller, faster, and possibly consume less energy in practical applications. We focus on the classification task, which is generally the foundation of the many up-stream computer vision tasks like detection and segmentation. Therefore, this work potentially has a broad application especially in the computer vision areas.

The algorithm itself has few negative societal issues, but when it makes many AI-driven technologies applicable in practice, the impact really depends on how humans use them.  This actually falls into the general ethical discussion on whether AI is good or not. Beyond this scope, this work does not have specific negative societal impacts brought by its potential application, to our best knowledge.

\end{document}